\newcommand{\dd}{{\mathrm d}} 
\newcommand\reb{\textcolor{black}}
\title{Do Residual Neural Networks discretize Neural Ordinary Differential Equations?}
\author{
  Michael E. Sander \\
ENS, CNRS\\
 Paris, France\\
 \texttt{michael.sander@ens.fr} \\
  \And
  Pierre Ablin \\
  Université Paris-Dauphine, CNRS \\
  Paris, France \\
  \texttt{pierreablin@gmail.com} \\
  \And
  Gabriel Peyré \\
  ENS, CNRS \\
  Paris, France \\
  \texttt{gabriel.peyre@ens.fr} \\
}
\begin{document}

\maketitle

\begin{abstract}
Neural Ordinary Differential Equations (Neural ODEs) are the continuous analog of Residual Neural Networks (ResNets). We investigate whether the discrete dynamics defined by a ResNet are close to the continuous one of a Neural ODE. We first quantify the distance between the ResNet's hidden state trajectory and the solution of its corresponding Neural ODE. Our bound is tight and, on the negative side, does not go to $0$ with depth $N$ if the residual functions are not smooth with depth. On the positive side, we show that this smoothness is preserved by gradient descent for a ResNet with linear residual functions and small enough initial loss. It ensures an implicit regularization towards a limit Neural ODE at rate $\frac1N$, uniformly with depth and optimization time. As a byproduct of our analysis, we consider the use of a memory-free discrete adjoint method to train a ResNet by recovering the activations on the fly through a backward pass of the network, and show that this method theoretically succeeds at large depth if the residual functions are Lipschitz with the input. We then show that Heun's method, a second order ODE integration scheme, allows for better gradient estimation with the adjoint method when the residual functions are smooth with depth. We experimentally validate that our adjoint method succeeds at large depth, and that Heun’s method needs fewer layers to succeed. We finally use the adjoint method successfully for fine-tuning very deep ResNets without memory consumption in the residual layers.
\end{abstract}

\section{Introduction}
\paragraph{Problem setup.} Residual Neural Networks (ResNets) \citep{he2016deep, he2016identity} keep on outperforming state of the art in computer vision \citep{wightman2021resnet, bello2021revisiting}, and more generally skip connections are widely used in a various range of applications \citep{vaswani2017attention, dosovitskiy2020image}. 
A ResNet  of depth $N$ iterates, starting from $x_0 \in \RR^d$, $x_{n+1} = x_n + f(x_n,\theta_n^N)$ and outputs a final value $x_N \in \RR^d$ where $f$ is a neural network called residual function. In this work, we consider a simple modification of this forward rule by letting explicitly the residual mapping depend on the depth of the network:
\begin{equation}\label{eq:ResNet}
    x_{n+1} = x_n + \frac{1}{N}f(x_n,\theta_n^N).
\end{equation}
On the other hand, a Neural ODE \citep{chen2018Neural} uses a neural network $\phi_{\Theta}(x, s)$, that takes time $s$ into account, to parameterise a vector field \citep{kidger2022Neural} in a differential equation, as follows,
\begin{equation}\label{eq:Neural_ode}
  \frac{\dd x }{\dd s} = \phi_{\Theta}(x(s),s)
   \quad\text{with}\quad
  x(0) = x_0, 
\end{equation}
and outputs a final value $x(1)\in \RR^d$, the solution of Eq.\eqref{eq:Neural_ode}.
The Neural ODE framework enables learning without storing activations (the $x_n$'s) using the adjoint state method, hence significantly reducing the memory usage for backpropagation that can be a bottleneck during training \citep{wang2018superneurons,peng2017large,zhu2017unpaired, gomez2017reversible}.
\begin{wrapfigure}{r}{0.45\textwidth}
\label{fig:intro}
\includegraphics[width=0.45\textwidth]{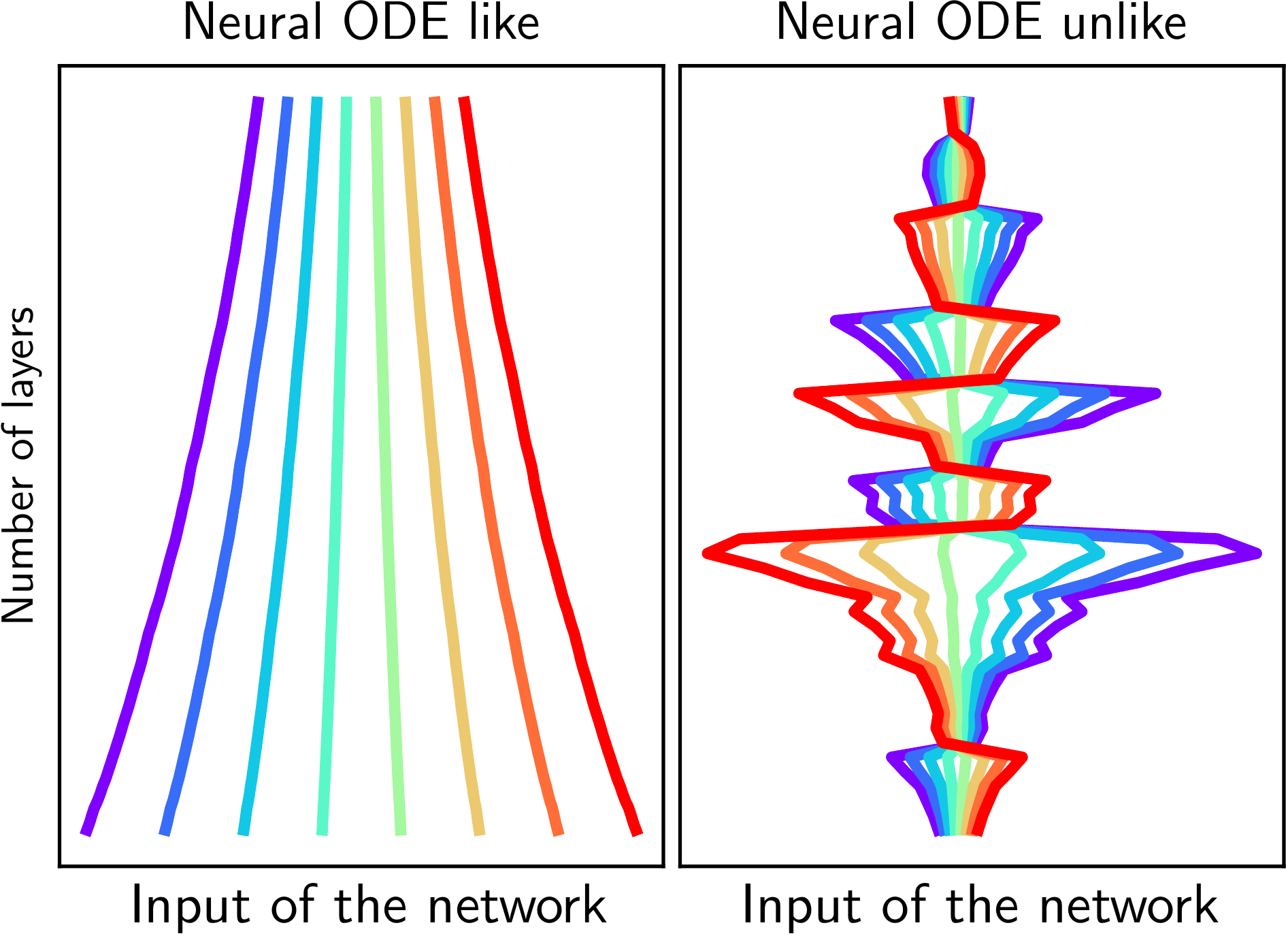}
\caption{\small{\textbf{Trajectory of ResNets with 300 layers.} Left: we learn $x \to \frac {x^2}{2}$, trajectories are smooth and do not intersect. Right: we learn $x \to \frac {-x^2}{2}$, trajectories are not smooth and intersect.}}
\vspace{-10pt}
\end{wrapfigure}
Neural ODEs also provide a theoretical framework to study deep learning models from the continuous viewpoint, using the arsenal of ODE theory \citep{dupont2019augmented, li2019deep, teshima2020universal}.
Importantly, they can also be seen as the continuous analog of ResNets.
Indeed, consider for $N$ an integer, the Euler scheme for solving Eq.~\eqref{eq:Neural_ode} with time step $\frac1N$ starting from $x_0$ and iterating $x_{n+1} = x_n + \frac1N \phi_{\Theta}(x_n,\frac nN)$. Under mild assumptions on $\phi_{\Theta}$, this scheme is known to converge to the true solution of Eq.~\eqref{eq:Neural_ode} as $N$ goes to $+\infty$. Also, if $\Theta = (\theta_n^N)_{i\in[N-1]}$ and $ 
\phi_{\Theta}(.,\frac nN) = f(., \theta_n^N)$, then the ResNet equation Eq.~\eqref{eq:ResNet} corresponds to a Euler discretization with time step $\frac1N$ of Eq.~\eqref{eq:Neural_ode}. However, for a given ResNet  with fixed depth $N$ and weights, the activations in Eq.~\eqref{eq:ResNet} can be far from the solution of Eq.~\eqref{eq:Neural_ode}. 
This is illustrated in Figure \ref{fig:intro} where we show that a deep ResNet can easily break the topology of the input space, which is impossible for a Neural ODE. 
In this paper, we study the link between ResNets and Neural ODEs.
We make the following contributions:
\begin{itemize}[leftmargin=*]
    \item In Section \ref{sec:resnet_discretize}, we propose a framework to define a set of associated Neural ODEs for a given ResNet. We control the error between the discrete and the continuous trajectory. We show that without additional assumptions on the smoothness with depth of the residual functions, this error does not go to $0$ as $N\to\infty$ (Prop. \ref{prop:approx_error}). However, we show that under some assumptions on the weight initialization, the trained parameters of a deep linear ResNet uniformly (with respect to both depth and training time) approach a Lipschitz function as the depth $N$ of the network goes to infinity, at speed $\frac1N$ (Prop. \ref{prop:smoothness} and Th. \ref{th:limit_map}). This result highlights an implicit regularization towards a limit Neural ODE.
    \item In Section \ref{sec:adjoint}, we investigate a simple technique to train ResNets without storing activations. Inspired by the adjoint method, we propose to recover the approximated activations during the backward pass by using a reverse-time Euler scheme. We control the error for recovering the activations and gradients with this method. We show that if the residuals of the ResNet are bounded and Lipschitz continuous, with constants independent of $N$, then this error scales in $O(\frac1N)$ (Prop. \ref{prop:reconstruction_error}).
    Hence, the adjoint method needs a large number of layers to lead to correct gradients (Prop. \ref{prop:gradient_error}). We then consider a smoothness-dependent reconstruction with Heun’s method to bound the error between the true and approximated gradient by a term that depends on $\frac1N$ times the smoothness in depth of the residual functions, hence guaranteeing a better approximation when successive weights are close one to another (Prop. \ref{prop:reconstruction_error_heun} and \ref{prop:gradient_error_heun}).
    \item In Section \ref{sec:experiments}, on the experimental side, we show that the adjoint method fails when training a ResNet 101 on ImageNet. Nevertheless, we empirically show that very deep ResNets pretrained with tied weights \reb{(constant weights: $\theta^N_n = \theta$ $\forall n$)} can be refined -using our adjoint method- on CIFAR-10 and ImageNet by untying their weights, leading to a better test accuracy. Last, but not least, we show using a ResNet architecture with heavy downsampling in the first layer that our adjoint method succeeds at large depth and that Heun's method leads to a better behaved training, hence confirming our theoretical results.
\end{itemize}
\section{Background and related work}
\paragraph{Neural ODEs.} Neural ODEs are a class of implicit deep learning models defined by an ODE where a neural network parameterises the vector field \citep{E_2017,chen2018Neural,dupont2019augmented,sun2018stochastic,E_2018,lu2017finite,ruthotto2018deep, kidger2022Neural}. Given an input $x_0$, the output of the model is the solution of the ODE \eqref{eq:Neural_ode} at time $1$. From a theoretical viewpoint, the expression capabilities of Neural ODEs have been investigated in \citep{cuchiero2020deep, teshima2020universal, li2019deep} and the Neural ODE framework has been used to better understand the dynamics of more general architectures that include residual connections such as Transformers \citep{sander2022sinkformers, lu2019understanding}.
Experimentaly, Neural ODEs have been successful in a various range of applications, among which physical modelling \citep{greydanus2019hamiltonian, cranmer2020lagrangian} and generative modeling \citep{chen2018Neural,grathwohl2018ffjord}. However, there are many areas where Neural ODEs have failed to replace ResNets, for instance for building computer vision classification models. Neural ODEs fail to compete with ResNets on ImageNet, and to the best of our knowledge, previous works using Neural ODEs on ImageNet consider weight-tied architectures and only achieves the same accuracy as a ResNet18 \citep{zhuang2021mali}. 
\paragraph{Implicit Regularization of ResNets towards ODEs.}
Recent works have studied the link between ResNets and Neural ODEs. In \citep{cohen2021scaling}, the authors carry experiments to better understand the
scaling behavior of weights in ResNets as a function of the depth. They show that under the assumption that there exists a scaling limit $\theta(s) = N^{\beta} \lim \theta^N_{\lfloor Ns \rfloor }$ for the weights of the ResNets (with $0< \beta < 1$) and if the scale of the ResNet is $\frac1{N^{\alpha}}$ with $0 < \alpha < 1 $ and $\alpha + \beta = 1$, then the hidden state of the ResNet converges to a solution of a linear ODE. In this paper, we are interested in the case where $\alpha=1$, which seems more natural since it is the scaling that appears in Euler's method with step $\frac1N$.
 In addition, we do not assume the existence of a scaling limit $\theta(s) = \lim \theta^N_{\lfloor Ns \rfloor }$. In subsection \ref{subsec:linear}, we demonstrate the existence of this scaling limit in the linear setting, under some assumptions. The recent work \citep{cont2022convergence} shows results regarding linear convergence of gradient descent in ResNets and prove the existence of an $\frac12$-Hölder continuous scaling limit as $N \to \infty$ with a scaling factor for the residuals in $\frac{1}{\sqrt{N}}$ which is different from ours. In contrast, we show that our limit function is Lipschitz continuous, which is a stronger regularity. We also show that our convergence is uniform in depth and optimization time. More generally, recent works have proved the convergence of gradient descent training of ResNet when the initial loss is small enough. This include ResNet with finite width but arbitrary large depth~\citep{du2019gradient,liu2020linearity} and ResNet with both infinite width and depth
\citep{lu2020mean,barboni2021global}. These convergence proofs leverage an implicit bias toward weights with small amplitudes. They however leave open the question of convergence of individual weights as depth increases, which we tackle in this work in the linear case. This requires showing an extra bias toward weights with small variations across depth. 

\paragraph{\reb{Memory bottleneck in ResNets.}}
\reb{Training deep learning models involve graphics processing units (GPUs) where memory is a practical bottleneck \citep{wang2018superneurons,peng2017large,zhu2017unpaired}. Indeed, backpropagation requires to store activations at each layer during the forward pass. Since samples are processed using mini batches, this storage can be important. For instance, with batches of size 128, the memory needed to compute gradients for a ResNet 152 on ImageNet is about 22 GiB. Note that the memory needed to store the parameters of the model is only 220 MiB, which is negligible compared to the memory needed to store the activations.
Thus, designing deep invertible architectures where one can recover the activations on the fly during the backpropagation iterations has been an active field in recent years \citep{gomez2017reversible, sander2021momentum, jacobsen2018irevnet}. In this work, we propose to approximate activations using a reverse-time Euler scheme, as we detail in the next subsection.} 
\paragraph{Adjoint Method.} Consider a loss function $L(x_N)$ for the ResNet~\eqref{eq:ResNet}. The backpropagation  equations~\citep{baydin2015automatic} are
\begin{equation}\label{eq:backprop}
    \nabla_{\theta^N_{n-1}}L =  \frac{1}{N}[\partial_\theta f(x_{n-1}, \theta^N_{n-1})]^\top \nabla_{x_n}L,\quad
\nabla_{x_{n-1}}L = [I + \frac1N  \partial_xf(x_{n-1}, \theta^N_{n-1})]^\top \nabla_{x_n}L.
\end{equation}
Now, consider a loss function $L(x(1))$ for the Neural ODE~\eqref{eq:Neural_ode}. The adjoint state method \citep{pontryagin1987mathematical, chen2018Neural} gives 
\begin{equation}\label{eq:adjoint_state}
    \nabla_{\Theta} L = - \int_{T}^{0}  {\partial_{\Theta} [\phi_{\Theta}(x(s), s)}]^\top \nabla_{x(s)} L \dd s, \quad -\dot{\nabla}_{x(s)} L(s) = [\partial_x {\phi_{\Theta}(x(s), s)}]^\top \nabla_{x(s)} L.
\end{equation}
Note that if $\Theta = (\theta_n^N)_{n\in[N-1]}$ and $ 
\phi_{\Theta}(.,\frac nN) = f(., \theta_n^N)$, then Eq.~\eqref{eq:backprop} corresponds to a Euler discretization with time step $\frac1N$ of Eq.~\eqref{eq:adjoint_state}. The key advantage of using  Eq.~\eqref{eq:adjoint_state} is that one can recover $x(s)$ on the fly by solving the Neural ODE~\eqref{eq:Neural_ode} backward in time starting from $x(1)$. This strategy avoids storing the forward trajectory $(x(s))_{s\in[0, 1]}$ and leads to a $O(1)$ memory footprint \citep{chen2018Neural}. In this work, we propose to use a discrete adjoint method by using a reverse-time Euler scheme for approximately recovering the activations in a ResNet (Section \ref{sec:adjoint}).  Contrarily to other models such as RevNets \citep{gomez2017reversible} (architecture change) or Momentum ResNets \citep{pmlr-v139-sander21a} (forward rule modification) which rely on an exactly invertible forward rule, the proposed method requires no change at all in the network, but gives approximate gradients.
\paragraph{\reb{Notations.}} \reb{For $k \in \NN$, $\Cc^k$ is the set of functions $f : \RR^d \to \RR^d$ $k$ times differentiable with continuous $k^{th}$ differential. If $f \in \Cc^1$, $\partial_x f(x)[y]$ is the differential of $f$ at $x$ evaluated in $y$. For $K \subset \RR^d$ compact, $\|.\|$ a norm and $f$ a continuous function on $\RR^d$, we denote $\|f\|^K_{\infty} = \sup_{x\in K} \| f (x) \|$.}
 
\section{ResNets as discretization of Neural ODEs}\label{sec:resnet_discretize}
In this section we first show that without further assumptions, the distance between the discrete trajectory and the solution of associated ODEs can be constant with respect to the depth of the network if the residual functions lack smoothness with depth. We then present a positive result by studying the linear case where we show that, under some hypothesis (small loss initialization and initial smoothness with depth), the ResNet converges to a Neural ODE as the number of layers goes to infinity. We show that this convergence is uniform with depth and optimization time.
\subsection{Distance to an ODE}
We first define associated Neural ODEs for a given ResNet.
\begin{defn}
We say that a neural network $\phi_{\Theta} : \RR^d \times \RR \to \RR^d$ smoothly interpolates the ResNet Eq.~\eqref{eq:ResNet}.
if $\phi_{\Theta}$ is smooth and
$\forall n \in \{0, ..., N-1\}$, $\phi_{\Theta}(., \frac nN) = f(., \theta_n^N)$.
\end{defn}
Note that we omit the dependency of $\Theta$ in $N$ to simplify notations.
For example, for a given ResNet, there are two natural ways to interpolate it with a Neural ODE, either by interpolating the \emph{residuals}, or by interpolating the \emph{weights}. Indeed, one can interpolate the residuals with $\phi_{\Theta}(\cdot, s) =  (n+1- Ns)f(., \theta_n^N) + (Ns - n) f(\cdot, \theta_{n+1}^N)$ when $s\in [\frac n N, \frac{n+1}N]$, or interpolate the weights with   $\phi_{\Theta}(\cdot, s) = f(\cdot, (n+1 - Ns)\theta_n^N + (Ns - n)\theta_{n+1}^N)$ for $s\in [\frac n N, \frac{n+1}N]$. If $\theta_n^N = \theta^N$ does not depend on $n$, then both interpolations are identical and one can simply consider  $\phi_{\Theta}(x, s) = f(x, \theta^N)$, $\forall (x, s)$.

We now consider \reb{any} smooth interpolation $\phi_{\Theta}$ for the ResNet \eqref{eq:ResNet} and a Euler scheme for the Neural ODE \eqref{eq:Neural_ode} with time step $\frac{1}{N}$.
\begin{prop}[Approximation error]\label{prop:approx_error}
 We suppose that $\phi_{\Theta}$ is $\Cc^1$, and $L$-Lipschitz with respect to $x$, uniformly in $s$. Note that this implies that the solution of Eq.~\eqref{eq:Neural_ode} is well defined, unique, $\Cc^2$, and that the trajectory is included in some compact $K \subset \RR^d$. Denote $C_N \eqdef \| \partial_s\phi_{\Theta} + \partial_x\phi_{\Theta} [\phi_{\Theta}]\|^{K \times [0, 1]}_{ \infty}$. Then one has for all $n$:
$
\| x_n - x(\frac{n}{N})\| \leq \frac{e^{L}-1}{2NL}C_N
$
\reb{if $L > 0 $ and
$\| x_n - x(\frac{n}{N})\| \leq \frac{C_N}{2N}$ if $L = 0$.}

\end{prop}
For a full proof, see appendix \ref{proof:approx_error}. Note that this result extends Theorem 3.2 from \citep{zhuang2020adaptive} to the non-autonomous case: our bound depends on $\partial_s\phi_{\Theta}.$ 
Finally, our bound is tight. Indeed, for $\phi_{\Theta}(x, s) = as + b$ for $a, b \in \RR^d$, we get $\partial_s \phi_{\Theta} + \partial_x\phi_{\Theta} [\phi_{\Theta}]= a$, $L=0$ and $\|x(1) - x_N\| = \frac {\|a\|} {2N}$.
\paragraph{Implication.} The tightness of our bound shows that closeness to the ODE solution is not guaranteed, because we do not know whether $C_N/N \rightarrow 0$. Indeed, consider first the residual interpolation $\phi_{\Theta}(x, s) =  \left( (n+1 - Ns)f(x, \theta_n^N) + (Ns - n) f(x, \theta_{n+1}^N)\right)) \mathbb{1}_{s \in [\frac nN, \frac {n+1}N]}$ and the simple case where $\partial_x\phi_{\Theta} [\phi_{\Theta}] = 0$. We get $\partial_s \phi_\Theta(x,s) = N (f(x,\theta_{n+1}^N) - f(x, \theta_{n}^N))\mathbb{1}_{s \in [\frac nN, \frac {n+1}N]}$, which corresponds to the discrete derivative. It means that although there is a $\frac1N$ factor in our bound, the time derivative term -- without further regularity with depth of the weights, which is at the heart of subection~\ref{subsec:linear} -- usually scales with $N$: $\frac{C_N}N=O(1)$.
As a first example, consider the simple case where $f(x, \theta_n^N) = n.$ This gives $x_N = x_0 + \frac{(N-1)}{2}$ while the integration of the Neural ODE \eqref{eq:Neural_ode} leads to $x(1) =x_0 +  \frac N2$ because $\phi_{\Theta}(x, s) = Ns$, so the $\|x_N - x(1)\| = \frac{1}{2}$ is not small. Intuitively, this shows that weights cannot scale with depth when using the residual interpolation.
Now, consider the weight interpolation, $\theta_n^N = (-1)^n$ and suppose $f$ is written as $f(x, \theta) = \theta^2$. This gives $\phi_{\Theta}(., s) = (2Ns -(2n+1))^2$ when $s\in [\frac n N, \frac{n+1}N]$. Integrating, we get $x(1) = \frac{1}{3}$ while the output of the ResNet is $x_N = 1$. Hence $\|x_N - x(1)\| = \frac{2}{3}$ is also not small, even though the weights are bounded.
Thus, one needs additional regularity assumptions on the weights of the ResNet to obtain a Neural ODE in the large depth limit. Intuitively if the weights are initialized close from one another and they are updated using gradient descent, they should  stay close from one another during training, 
since the gradients in two consecutive layers will be similar, as highlighted in Eq. \eqref{eq:backprop}.
Indeed, we see that if $x_{n}$ and $x_{n+1}$ are close, then  $\nabla_{x_n}L$ and $\nabla_{x_{n+1}} L$ are close, and then if $\theta_n^N$ and $\theta_{n+1}^N$ are close, $\nabla_{\theta_n^N}L$ and $\nabla_{\theta_{n+1}^N}L$ are also close. In subsection \ref{subsec:linear}, we formalize this intuition and present a positive result for ResNets with linear residual functions. More precisely, we show that with proper initialization, the difference between two successive parameters is in $\frac1N$ during the entire training. Furthermore, we show that the weights of the network converge to a smooth function, hence defining a limit Neural ODE. 

\subsection{Linear Case}\label{subsec:linear}
As a further step towards a theoretical understanding of the connections between ResNets and Neural ODEs we investigate the linear setting, where the residual functions are written $f(x, \theta) = \theta x$ for any $\theta \in \RR^{d\times d}$. It corresponds to a deep matrix factorization problem \citep{zou2020global, bartlett2018gradient, arora2019implicit, arora2018optimization}. As opposed to these previous works, we study the infinite depth limit of these linear ResNets with a focus on the learned weights. We show that, if the weights are initialized close one to another, then at any training time, the weights stay close one to another (Prop. \ref{prop:smoothness}) and importantly, they converge to a smooth function of the continuous depth $s$ as $N \to \infty$ (Th. \ref{th:limit_map}). All the proofs are available in appendix \ref{app:proofs}.
\paragraph{Setting.}
 Given a training set $(x_k, y_k)_{k\in[n]}$ in $\RR^d$, we solve the regression problem of mapping $x_k$ to $y_k$ with a linear ResNet, \emph{i.e.} $f(x, \theta) = \theta x$, of depth $N$ and parameters $(\theta_1^N, \dots, \theta_N^N)$. The ResNet therefore maps $x_k$ to $\Pi^N x_k$ where $\Pi^N \eqdef \prod_{n=1}^N (I_d + \frac{\theta_n^N}{N}) = (I_d + \frac{\theta_N^N}{N}) \cdots (I_d + \frac{\theta_1^N}{N})$. It is trained by minimizing the average errors $\|\Pi^N x_k - y_k\|_2^2$, which is equivalent to the deep matrix factorization problem:
\begin{equation}\label{eq:min_problem}
    \argmin_{(\theta_n^N)_{i\in[N-1]}}L(\theta_1^N, \dots, \theta_N^N) \eqdef  \|\Pi^N -B\|_{\Sigma}^2,
\end{equation} where $\|A\|^2_{\Sigma} = \mathrm{Tr}(A\Sigma A^T)$, 
$\Sigma$ is the empirical covariance matrix of the data: $\Sigma \eqdef \frac1n \sum_{k=1}^nx_kx_k^\top$, and $B \eqdef \frac1n \sum_{k=1}^ny_kx_k^\top\Sigma^{-1}$. As is standard, we suppose that $\Sigma$ is non degenerated. We denote by $M>0$ (resp. $m >0$) its largest (resp. smallest) eigenvalue. 
\paragraph{Gradient.} We denote $\Pi^N_{:n} \eqdef (I_d + \frac{\theta_N^N}{N})\cdots (I_d + \frac{\theta_{n+1}^N}{N})$ and  $\Pi^N_{n:} \eqdef (I_d + \frac{\theta_{n-1}^N}{N})\cdots (I_d + \frac{\theta_{1}^N}{N})$
and write the gradient \reb{ $\nabla^N_n(t)= \nabla_{\theta_n^N}L(\theta^N_1(t), .. \theta^N_n(t), .., \theta^N_N(t))$}. The chain rule gives $N \nabla^N_n =\Pi_{:n}^{N\top}(\Pi^N - B)\Sigma\Pi_{n:}^{N\top} $. 
Intuitively, as $N$ goes to $+\infty$, the products $\Pi^N$, $\Pi_{:n}^{N}$ and $\Pi_{n:}^{N}$ should converge to some limit, hence we see that $N\nabla_n^N$ scale as $1$. Therefore, we train $\theta_n^N$ by the rescaled gradient flow $\frac{d\theta_n^N}{dt}(t) = - N\nabla^N_n(t)$ to minimize $L$ and denote  ${\ell}^N(t) = L(\theta_1^N(t), \dots, \theta_N^N(t))$.
\paragraph{Two continuous variables involved.} Our results involve two continuous variables: $s \in [0, 1]$ is the depth of the limit network and corresponds to the time variable in the Neural ODE, whereas $t\in \RR_{+}$ is the gradient flow time variable.
As is standard in the analysis of convergence of gradient descent for linear networks, we consider the following assumption:
\begin{asp}\label{asp:init}
Suppose that at initialisation one has $\sqrt{{\ell}^N(0)} < \frac{m}{4\sqrt{2Me^3}}$ and  $\| \theta_n^{N}(0)\| \leq \frac14$. 
\end{asp}
Assumption \ref{asp:init} is the classical assumption in the literature \citep{zou2020global, barboni2021global} to prove linear convergence of our loss and that the $\theta^N_n(t)$'s stay bounded with $t$. \reb{Note that this bounded norm assumption implies that $\frac1N \theta_n(0) = O(\frac1N)$. This is in contrast with classical initialization scales in the \textit{feedforward} case where the initialization only depends on width \citep{he2015delving}. However this initialization scale is coherent with those of \textit{ResNets} for which the scale has to depend on depth \citep{yang2017mean}. In addition, the experimental findings in \cite{cohen2021scaling} suggest that the weights in ResNets scale in $\frac1{N^\beta}$ with $\beta > 0$.} 

We now prove an implicit regularization result showing that if at initialization, in addition to assumption \ref{asp:init}, the weights are close from one another ($O(\frac1N)$), they will stay at distance $O(\frac1N)$: the discrete derivative stay in $O(\frac1N)$, which is a central result to consider the infinite depth limit in our Th. \ref{th:limit_map}.
\begin{prop}[Smoothness in depth of the weights]\label{prop:smoothness}
Suppose assumption \ref{asp:init}. Suppose that there exists $C_0 >0$ independent of $n$ and $N$ such that $\|\theta_{n+1}^N(0)- \theta_n^N(0)\| \leq \frac{C_0}{N}$. Then, 
$\forall t \in \RR_+$, $\|\theta_n^N(t)\| < \frac{1}{2}$, and  $\theta_n^N(t)$ admits a limit $\psi_n^{N}$ as ${t \to +\infty}$. Moreover, there exists $C > 0$ such that $\forall t \in \RR_+$, $\|\theta_{n+1}^N(t)- \theta_n^N(t)\| \leq \frac{C}{N}$.
\end{prop}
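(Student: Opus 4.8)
The plan is to combine the classical small-initialization analysis, which yields boundedness of the weights and a \emph{uniform-in-$N$} linear convergence rate for the loss, with an algebraic cancellation in the difference of two consecutive rescaled gradients that produces an extra factor $\frac1N$. Throughout I write $P \eqdef \Pi^N - B$, $R_j \eqdef I_d + \frac{\theta_j^N}{N}$, and $\delta_n^N(t) \eqdef \theta_{n+1}^N(t) - \theta_n^N(t)$, and I recall from the excerpt that $N\nabla_n^N = \Pi_{:n}^{N\top}P\Sigma\,\Pi_{n:}^{N\top}$.

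First I would set up a bootstrap establishing the first two claims. Consider the maximal time interval on which $\|\theta_n^N(t)\| < \tfrac12$ for every $n$. There, each factor satisfies $\|R_j\| \le 1 + \frac{1}{2N}$ and $\|R_j^{-1}\| \le (1-\frac{1}{2N})^{-1}$, so the partial products are uniformly well conditioned: $\|\Pi_{:n}^N\|, \|\Pi_{n:}^N\|, \|(\Pi_{:n}^N)^{-1}\|, \|(\Pi_{n:}^N)^{-1}\| \le e^{1/2}$. Using invertibility, $\|N\nabla_n^N\| \ge e^{-1}\|P\Sigma\| \ge e^{-1}m\|P\|$, while $\ell^N = \mathrm{Tr}(P\Sigma P^\top) \le M\|P\|^2$; summing the $N$ terms yields the Polyak--\L ojasiewicz inequality
\begin{equation*}
\frac{\dd \ell^N}{\dd t} = -\frac1N\sum_{n=1}^N \|N\nabla_n^N\|^2 \le -\frac{m^2}{Me^2}\,\ell^N ,
\end{equation*}
whose rate $\mu \eqdef \frac{m^2}{Me^2}$ is \emph{independent of $N$}, so $\ell^N(t) \le \ell^N(0)e^{-\mu t}$. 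The same conditioning gives $\|\dot\theta_n^N\| = \|N\nabla_n^N\| \le \frac{eM}{\sqrt m}\sqrt{\ell^N}$, hence $\int_0^\infty \|\dot\theta_n^N\|\,\dd t \le \frac{2eM}{\mu\sqrt m}\sqrt{\ell^N(0)}$; Assumption~\ref{asp:init} is calibrated precisely so that this total displacement is below $\frac14$. Starting from $\|\theta_n^N(0)\| \le \frac14$ the weights therefore never reach $\frac12$, so by maximality the interval is all of $\RR_+$ (proving $\|\theta_n^N(t)\| < \frac12$), and finiteness of the length integral makes each trajectory Cauchy, yielding the limit $\psi_n^N$.

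The heart of the proof is the third claim. Differentiating, $\frac{\dd}{\dd t}\delta_n^N = -(N\nabla_{n+1}^N - N\nabla_n^N)$, and the decisive observation is that both gradients share a common core. Setting $G \eqdef \Pi_{:(n+1)}^{N\top}P\Sigma\,\Pi_{n:}^{N\top}$ and using $\Pi_{:n}^N = \Pi_{:(n+1)}^N R_{n+1}$ together with $\Pi_{(n+1):}^N = R_n\,\Pi_{n:}^N$, one finds $N\nabla_n^N = R_{n+1}^\top G$ and $N\nabla_{n+1}^N = G R_n^\top$. Since $R_j = I_d + \frac{\theta_j^N}{N}$, the leading ($I_d$) terms cancel and only the $\frac1N$ remainder survives:
\begin{equation*}
N\nabla_{n+1}^N - N\nabla_n^N = \frac1N\left(G\,\theta_n^{N\top} - \theta_{n+1}^{N\top}G\right).
\end{equation*}
Bounding $\|G\| \le \|\Pi_{:(n+1)}^N\|\,\|\Sigma\|\,\|\Pi_{n:}^N\|\,\|P\| \le \frac{eM}{\sqrt m}\sqrt{\ell^N}$ and using $\|\theta_n^N\|, \|\theta_{n+1}^N\| < \frac12$ gives $\big\|\frac{\dd}{\dd t}\delta_n^N\big\| \le \frac{eM}{N\sqrt m}\sqrt{\ell^N(t)}$.

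Integrating in time and invoking the uniform linear convergence then yields $\|\delta_n^N(t)\| \le \|\delta_n^N(0)\| + \frac{eM}{N\sqrt m}\int_0^\infty \sqrt{\ell^N}\,\dd t \le \frac{C_0}{N} + \frac{2eM}{N\mu\sqrt m}\sqrt{\ell^N(0)} \le \frac{C}{N}$, with $C$ independent of $n$, $N$ and $t$ because $\sqrt{\ell^N(0)}$ is bounded by Assumption~\ref{asp:init} and $\mu$ is $N$-independent; letting $t\to\infty$ transfers the bound to the limits $\psi_n^N$. I expect the main obstacle to be not the cancellation itself but securing the \emph{uniform-in-$N$} rate $\mu$: the argument collapses unless $\int_0^\infty \sqrt{\ell^N}\,\dd t$ stays bounded as $N\to\infty$, which is exactly what the $N$-independence of the Polyak--\L ojasiewicz constant $\frac{m^2}{Me^2}$ guarantees, and what in turn rests on the partial products $\Pi_{:n}^N, \Pi_{n:}^N$ remaining uniformly well conditioned --- the reason Assumption~\ref{asp:init} and the $\|\theta_n^N\| < \frac12$ bootstrap are indispensable.
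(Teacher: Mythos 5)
Your proof follows essentially the same route as the paper's: a bootstrap on the maximal interval where $\|\theta_n^N\|<\tfrac12$ combined with an $N$-independent Polyak--\L{}ojasiewicz rate to get uniform linear decay of $\ell^N$ and integrable gradients, and then the commutation identity $N\nabla_{n+1}^N - N\nabla_n^N = \tfrac1N\bigl(G\,\theta_n^{N\top} - \theta_{n+1}^{N\top}G\bigr)$, which is exactly the paper's relation $(I+\tfrac{\theta_{n+1}^N}{N})^\top\nabla_{n+1}=\nabla_n(I+\tfrac{\theta_n^N}{N})^\top$ written without the inverse factor. One quantitative caveat: with your looser PL constant $\mu=\tfrac{m^2}{Me^2}$, obtained through the lossy chain $\ell^N\le M\|P\|^2$ and $\|N\nabla_n^N\|\ge e^{-1}m\|P\|$, the total-displacement bound under Assumption~\ref{asp:init} evaluates to about $\tfrac{e^{3/2}}{2\sqrt2}(M/m)^{3/2}\ge 1$ rather than $<\tfrac14$, so the bootstrap does not literally close; bounding instead $\|N\nabla_n^N\|^2\ge e^{-1}m\,\ell^N$ directly (via $\|P\Sigma\|_F^2\ge m\,\mathrm{Tr}(P\Sigma P^\top)$), as the paper does, yields the rate for which the assumption is calibrated and the same argument goes through.
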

For a full proof, see appendix \ref{proof:smoothness}. The inequality $\|\theta_{n+1}^N(t)- \theta_n^N(t)\| \leq \frac{C}{N}$ corresponds to a discrete Lipschitz property in depth. Indeed, for $s \in [0,1]$ and $t \in \RR_+$, let $\psi_N(s, t) = \theta^N_{\lfloor Ns \rfloor }(t)$. Then our result gives $\|\psi_N(\frac{n+1}N, t) - \psi_N(\frac{n}N, t)\| \leq \frac CN$ which implies that $\|\psi_N(s_1, t) - \psi_N(s_2, t)\| \leq C|s_1 - s_2| + \frac CN$.
 We now turn to the infinite depth limit $N \to \infty$.
 Th. \ref{th:limit_map} shows that there exists a limit function $\psi$ such that $\psi_N$ converges uniformly to $\psi$ in depth $s$ and optimization time $t$. Furthermore, this limit is Lipschitz continuous in $(s,t)$.
 In addition, we show that the ResNet $\Pi^N$ converges to the limit Neural ODE defined by $\psi$ that is preserved along the optimization flow, exhibiting an implicit regularization property of deep linear ResNets towards Neural ODEs.

\begin{thm}[Existence of a limit map]\label{th:limit_map}
Suppose assumption \ref{asp:init}, $\|\theta_{n+1}^N(0)- \theta_n^N(0)\| \leq \frac{C_0}{N}$ for some $C_0 > 0$ and that there exists a function $\psi_{\mathrm{init}} : [0, 1] \to \RR^{d\times d}$ such that $\psi_{N}(s, 0) \to \psi_{\mathrm{init}}(s)$ in $\|.\|_{\infty}$ uniformly in $s$ as $N \to \infty$, at speed $\frac1N$.
Then the sequence $(\psi_{N})_{N \in \NN}$ uniformly converges (in $\|.\|_{\infty}$ w.r.t $(s,t)$) to a limit $\psi$ Lipschitz continuous in $(s, t)$ and $\|\psi - \psi_N\|_{\infty}= O(\frac1N)$. Furthermore, $\Pi^N$ uniformly converges as $N \to \infty$ to the mapping $x_0 \to x_1$ where $x_1$ is the solution at time $1$ of the Neural ODE $\frac{\dd x}{\dd s}(s) = \psi(s,t) x(s)$ with initial condition $x_0$.
\end{thm}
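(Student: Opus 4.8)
The plan is to combine the uniform-in-$(n,N,t)$ estimates of Proposition \ref{prop:smoothness} with a Grönwall-type stability argument, organised around the self-consistent coupling between the weight field and the linear flow maps it generates. First I would record the regularity available for free. Since $\|\theta_n^N(t)\|<\tfrac12$ for all $n,N,t$, every factor $I_d+\tfrac1N\theta_n^N(t)$ has norm at most $1+\tfrac1{2N}$, so the products $\Pi^N(t),\Pi_{:n}^N(t),\Pi_{n:}^N(t)$ are bounded by $e^{1/2}$ uniformly; inserting this into $N\nabla_n^N=\Pi_{:n}^{N\top}(\Pi^N-B)\Sigma\Pi_{n:}^{N\top}$ gives $\|\tfrac{\dd}{\dd t}\theta_n^N\|\le e\,M\,\|\Pi^N-B\|$, uniformly bounded, so each $\theta_n^N(\cdot)$ is Lipschitz in $t$ uniformly in $n,N$. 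Together with the discrete Lipschitz-in-depth bound $\|\theta_{n+1}^N-\theta_n^N\|\le C/N$ of Proposition \ref{prop:smoothness}, the piecewise-constant field $\psi_N(s,t)=\theta^N_{\lfloor Ns\rfloor}(t)$ is uniformly bounded and uniformly (up to the $C/N$ correction) Lipschitz in $(s,t)$. Crucially, the small-loss analysis underlying Proposition \ref{prop:smoothness} gives linear convergence of the loss, $\ell^N(t)\le e^{-\kappa t}\ell^N(0)$ with $\kappa>0$ independent of $N$; since $\|\Pi^N-B\|_F^2\le \ell^N/m$, this yields $\|\tfrac{\dd}{\dd t}\theta_n^N(t)\|\le c\,e^{-\kappa t/2}$ with $c$ independent of $N$, so the $t$-variation of each weight is integrable uniformly in $N$.

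Next I would introduce the continuous flow maps: for a field $a(\cdot,t)$ let $R_a(s,t)$ and $S_a(s,t)$ denote the fundamental solutions from $0$ to $s$ and from $s$ to $1$ of $\tfrac{\dd x}{\dd s}=a(s,t)x$. The partial products $\Pi_{n:}^N(t)$ and $\Pi_{:n}^N(t)$ are exactly the Euler products for the field $\psi_N(\cdot,t)$, so the consistency estimate behind Proposition \ref{prop:approx_error}, applied to this linear, piecewise-constant field, gives $\|\Pi_{n:}^N(t)-R_{\psi_N}(\tfrac nN,t)\|=O(1/N)$ and likewise for $\Pi_{:n}^N$, uniformly in $n,t$. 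Passing formally to the limit in the rescaled gradient flow then suggests the limiting evolution
\begin{equation*}
\partial_t\psi(s,t)=-\,S_\psi(s,t)^\top\bigl(P(t)-B\bigr)\Sigma\,R_\psi(s,t)^\top,\qquad \psi(\cdot,0)=\psi_{\mathrm{init}},
\end{equation*}
where $P(t)=R_\psi(1,t)$ is the time-$1$ flow map. The core of the argument is to show that this integral equation is well posed in the Banach space of bounded fields with the sup norm (its right-hand side is Lipschitz in $a$ through the Grönwall dependence of $R_a,S_a$ on $a$, on the region $\|a\|\le\tfrac12$), and that its unique solution $\psi$ is the uniform limit of $\psi_N$.

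To obtain the limit together with the rate I would run a single Grönwall comparison on $\varepsilon_N(t)\eqdef\|\psi_N(\cdot,t)-\psi(\cdot,t)\|_\infty$. Differentiating in $t$ and inserting (i) the $O(1/N)$ discretization errors of the products, (ii) the Lipschitz dependence of the flow maps on the field, and (iii) the $O(1/N)$ initial error $\varepsilon_N(0)$ from the hypothesis on $\psi_N(\cdot,0)\to\psi_{\mathrm{init}}$, one gets a differential inequality of the form $\dot\varepsilon_N\le \alpha\,\varepsilon_N+\beta/N$. A naive integration loses uniformity in $t$ because of the factor $e^{\alpha t}$; this is the main obstacle. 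I would resolve it using the uniform exponential decay from the first step: both $\psi_N(\cdot,t)$ and $\psi(\cdot,t)$ lie within $O(e^{-\kappa t/2})$ of their $t\to\infty$ limits $\psi_N^\infty,\psi^\infty$ uniformly in $N$, so it suffices to control the transient window $[0,T_0]$ with $T_0\sim\log N$ by the finite-horizon Grönwall bound and to control the tail $t\ge T_0$ by $\varepsilon_N(t)\le O(e^{-\kappa t/2})+\|\psi_N^\infty-\psi^\infty\|$, the endpoint error $\|\psi_N^\infty-\psi^\infty\|=O(1/N)$ being obtained by letting $t\to\infty$ in the finite-horizon estimate. Choosing $T_0$ so that $e^{-\kappa T_0/2}\le 1/N$ splices the two regimes into $\|\psi-\psi_N\|_\infty=O(1/N)$, and the uniform almost-Lipschitz bounds pass to the limit to give the Lipschitz regularity of $\psi$. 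Finally, for the statement on $\Pi^N$, the field convergence feeds into the Grönwall dependence of flow maps on their field, giving $\|R_{\psi_N}(1,t)-R_\psi(1,t)\|=O(1/N)$, which combined with the product-consistency estimate yields $\Pi^N(t)\to P(t)=R_\psi(1,t)$ uniformly in $t$; since $x_1=P(t)x_0$ is exactly the time-$1$ solution of $\tfrac{\dd x}{\dd s}=\psi(s,t)x$, this is the claimed convergence to the limit Neural ODE.
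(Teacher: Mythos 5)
Your overall architecture is reasonable and partly mirrors the paper's: identify the limiting evolution $\partial_t\psi = F(\psi)$ with $F(\psi)(s,t)=-S_\psi(s,t)^\top(P(t)-B)\Sigma R_\psi(s,t)^\top$, prove well-posedness via a Lipschitz/Picard--Lindel\"of argument on the ball $\|\psi\|\le\frac12$, and deduce uniqueness of the limit; the paper does essentially this (via Ascoli--Arzel\`a compactness of $\psi_N$ plus uniqueness of the accumulation point). The problem is in your rate argument, which is where the real difficulty of the theorem lies. Your Gr\"onwall inequality $\dot\varepsilon_N\le\alpha\,\varepsilon_N+\beta/N$ has a \emph{constant} $\alpha$: the Lipschitz constant of $F$ in $\psi$ does not decay in $t$, because the term where you differentiate the time-$1$ flow $P(t)=R_\psi(1,t)$ with respect to the field carries no factor of $P(t)-B$ and hence no factor of $\sqrt{\ell(t)}$. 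Your proposed fix --- a transient window $[0,T_0]$ with $T_0\sim\frac{2}{\kappa}\log N$ plus exponential decay on the tail --- does not close: on $[0,T_0]$ the finite-horizon bound gives $\varepsilon_N(T_0)\le e^{\alpha T_0}\,O(1/N)=O(N^{2\alpha/\kappa-1})$, which is $O(1/N)$ only if $\alpha\le 0$ and does not even tend to $0$ unless $\alpha<\kappa/2$; there is no reason for the Lipschitz constant of $F$ to be dominated by the PL decay rate. The claim that $\|\psi_N^\infty-\psi^\infty\|=O(1/N)$ follows "by letting $t\to\infty$ in the finite-horizon estimate" is circular for the same reason.

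The missing idea is structural, and it is the content of the paper's Lemma~\ref{lemm:closeness}. One must split the difference of the gradients into (i) terms carrying a factor $\sqrt{\ell^N(t)}$, which is integrable in $t$, and (ii) the one dangerous term driven by the difference of the \emph{end-to-end maps}, $\Delta^N=\Pi^N-\tilde\Pi^N$. The PL condition $-\langle\Delta,H(\Delta)\rangle\le-\lambda\|\Delta\|^2$ shows that $\|\Delta^N\|$ obeys its own damped ODE, $\frac{\dd}{\dd t}\|\Delta^N\|\le aK'D^N\sqrt{\ell^N}-\lambda\|\Delta^N\|$, so $\int_0^t\|\Delta^N\|\le\frac1\lambda\int_0^t aK'D^N\sqrt{\ell^N}+O(1/N)$. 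Substituting back yields a Gr\"onwall inequality whose coefficient is $\mu\sqrt{\ell^N(t)}$, hence integrable, and the resulting bound is uniform in $t$ with no window-splitting. The paper also runs this comparison between depths $N$ and $2N$ (after regrouping the $2N$ product two by two) rather than between $\psi_N$ and the continuum solution, and then telescopes $\|\psi_{2^kN}-\psi_N\|\le\frac{2D}{N}$ to get the rate; this sidesteps the need to quantify the discretization error of the continuum flow maps uniformly in $t$. Without the damped equation for $\Delta^N$, your proof does not establish the uniform-in-$t$ $O(1/N)$ rate, which is the main claim of the theorem.
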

\begin{wrapfigure}{r}{0.45\textwidth}
\includegraphics[width=0.45\textwidth]{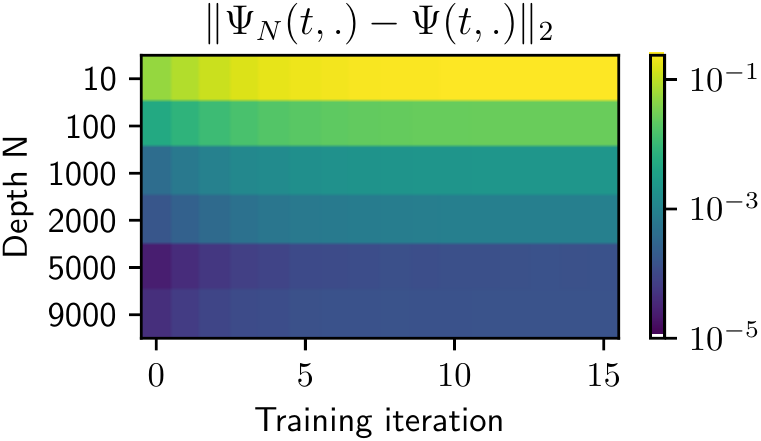}
\caption{\small{
\reb{$L_2$ norm $\|\Psi_N(t, .)  - \Psi(t, .)\|_2$ (w.r.t depth $s$) for different training iterations $t$ (horizontal axis) and different depth $N$ (vertical axis). As predicted by Th. \ref{th:limit_map}, this distance goes to $0$ as $N \to +\infty$.}
}}\label{fig:linear_weights}
\vspace{-25pt}
\end{wrapfigure}
We illustrate Th. \ref{th:limit_map} in Figure \ref{fig:linear_weights}. The assumption on the existence of $\psi_{\mathrm{init}}$ ensures a convergence at speed $\frac1N$ to a Neural ODE at optimization time $0$. Note that for instance, the constant initialization $\theta_n^{N}(0) = \theta_0 \in \RR^{d\times d}$ satisfies this hypothesis. 
 In order to prove Th. \ref{th:limit_map}, for which a full proof is presented in appendix \ref{proof:limit_map}, we first present a useful lemma: the weights of the network have at least one accumulation point. 
\begin{lem}[Existence of limit functions]\label{lemm:limit_functions}
For $s \in [0,1]$ and $t \in \RR_+$, let $\psi_N(s, t) = \theta^N_{\lfloor Ns \rfloor }(t)$. Under the assumptions of Prop. \ref{prop:smoothness}, there exists a subsequence $\psi_{\sigma(N)}$ and $\psi_{\sigma} : [0,1] \times \RR_+ \to \RR^{d\times d}$ Lipschitz continuous with respect to both parameters $s$ and $t$ such that $\psi_{\sigma(N)} \to \psi_{\sigma}$ uniformly (in $\|.\|_{\infty}$ w.r.t $(s,t)$). 
\end{lem}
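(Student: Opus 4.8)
The plan is to run an Arzelà--Ascoli argument on the family $(\psi_N)_N$, after handling two difficulties: each $\psi_N(\cdot,t)$ is only piecewise constant in $s$, and the time domain $\RR_+$ is not compact. First I would record the three uniform estimates furnished by Prop.~\ref{prop:smoothness} and Assumption~\ref{asp:init}. \textbf{(i) Uniform boundedness:} $\|\psi_N(s,t)\| = \|\theta^N_{\lfloor Ns\rfloor}(t)\| < \tfrac12$ for all $(s,t)$ and all $N$. \textbf{(ii) Approximate Lipschitz continuity in $s$:} the bound $\|\theta^N_{n+1}(t)-\theta^N_n(t)\|\le \tfrac CN$ gives $\|\psi_N(s_1,t)-\psi_N(s_2,t)\|\le C|s_1-s_2|+\tfrac CN$. \textbf{(iii) Lipschitz continuity in $t$:} since $\tfrac{d}{dt}\theta^N_n = -N\nabla^N_n$ with $N\nabla^N_n = \Pi^{N\top}_{:n}(\Pi^N-B)\Sigma\,\Pi^{N\top}_{n:}$, and since every factor $I_d+\tfrac{\theta^N_k}{N}$ has norm at most $1+\tfrac1{2N}$ (so $\|\Pi^N_{:n}\|,\|\Pi^N_{n:}\|\le e^{1/2}$), while $\|\Sigma\|\le M$ and $\|\Pi^N-B\|\le\sqrt{\ell^N/m}$ because $\Sigma\succeq mI_d$, I obtain a uniform bound $\|N\nabla^N_n(t)\|\le \tfrac{eM}{\sqrt m}\sqrt{\ell^N(t)}\le \tfrac{eM}{\sqrt m}\sqrt{\ell^N(0)}$. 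Hence $\|\psi_N(s,t_1)-\psi_N(s,t_2)\|\le K|t_1-t_2|$ for a constant $K$ independent of $N$.

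To remove the $\tfrac CN$ defect in (ii), I would replace $\psi_N$ by its piecewise-linear-in-$s$ interpolation $\tilde\psi_N$, which is genuinely $C$-Lipschitz in $s$, $K$-Lipschitz in $t$, uniformly bounded, and satisfies $\|\psi_N-\tilde\psi_N\|^{[0,1]\times\RR_+}_\infty\le\tfrac CN$. The family $(\tilde\psi_N)_N$ is then uniformly bounded and equicontinuous on each compact slab $[0,1]\times[0,T]$. Arzelà--Ascoli yields, for each integer $T$, a uniformly convergent subsequence, and a diagonal extraction over $T\in\NN$ produces a single subsequence $\sigma(N)$ along which $\tilde\psi_{\sigma(N)}$ converges uniformly on every slab to a limit $\psi_\sigma$ that inherits the $C$- and $K$-Lipschitz constants, hence is Lipschitz in $(s,t)$ on all of $[0,1]\times\RR_+$. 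Since $\|\psi_N-\tilde\psi_N\|_\infty\to0$, the same subsequence of $\psi_N$ converges to the same $\psi_\sigma$.

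The remaining obstacle, and the crux of the argument, is upgrading convergence on compact slabs to uniform convergence on the whole non-compact domain, which is precisely what the $\|.\|_\infty$ claim over $[0,1]\times\RR_+$ requires. Here I would use the linear convergence of the gradient flow established in the proof of Prop.~\ref{prop:smoothness}, namely $\ell^N(t)\le \ell^N(0)\,e^{-\lambda t}$ with $\lambda>0$ independent of $N$. Combined with (iii) this gives $\|\tfrac{d}{dt}\theta^N_n(t)\|\le \tfrac{eM}{\sqrt m}\sqrt{\ell^N(0)}\,e^{-\lambda t/2}$, so integrating the tail yields $\|\psi_N(s,t)-\psi_N(s,t')\|\le \tfrac{2eM}{\lambda\sqrt m}\sqrt{\ell^N(0)}\,e^{-\lambda t/2}$ for all $t'\ge t$, a bound uniform in $s$ and $N$ that vanishes as $t\to\infty$; the same tail estimate passes to the limit $\psi_\sigma$. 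Thus, given $\varepsilon>0$, I fix $T$ so large that $\psi_N(\cdot,t)$ stays within $\varepsilon$ of $\psi_N(\cdot,T)$ for all $t\ge T$ and all $N$, and likewise for $\psi_\sigma$. Splitting the domain into $t\le T$ (uniform convergence from the diagonal argument) and $t\ge T$ (controlled by the uniform tail and a triangle inequality) then gives uniform convergence of $\psi_{\sigma(N)}$ to $\psi_\sigma$ on all of $[0,1]\times\RR_+$, completing the proof.
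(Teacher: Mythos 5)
Your proof is correct and follows essentially the same route as the paper's: the same three uniform estimates (boundedness, $C|s_1-s_2|+\tfrac CN$ quasi-equicontinuity in $s$, Lipschitz continuity in $t$ via the gradient bound), an Arzel\`a--Ascoli/diagonal extraction, and the exponential decay $\ell^N(t)\le \ell^N(0)e^{-\lambda t}$ to get a tail bound uniform in $N$ and $s$ that upgrades compact-slab convergence to uniform convergence on $[0,1]\times\RR_+$. The only cosmetic difference is that you regularize via piecewise-linear interpolation before invoking the standard Arzel\`a--Ascoli theorem, whereas the paper reruns the Ascoli argument directly on the piecewise-constant $\psi_N$, carrying the $\tfrac CN$ defect through a dense-countable-set/Cauchy argument; both are valid.
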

Lemma \ref{lemm:limit_functions} is proved in appendix \ref{proof:limit_functions}, and gives us the existence of a Lipschitz continuous accumulation point, but not the uniqueness nor the convergence speed.
For the uniqueness, we show in appendix \ref{proof:limit_map} that, under the assumptions of Th. \ref{th:limit_map}, one has that any accumulation point of $\psi_{\sigma}$ satisfies the limit Neural ODE
\begin{equation*}
    \partial_t \psi_{\sigma}(., t) = F(\psi_{\sigma}(., t)), \quad \psi_{\sigma}(., 0) = \psi_{\mathrm{init}}(., 0),
\end{equation*}
and show that $F$ satisfies the hypothesis of the Picard–Lindelöf theorem, hence showing the uniqueness of $\psi$.
We finally show that, as intuitively expected, trajectories of the weights of our linear ResNets of depth $N$ and $2N$ remain close one to each other. This gives the convergence speed in Th.~\ref{th:limit_map}. See appendix \ref{proof:closeness} for a proof.

\begin{lem}[Closeness of trajectories]\label{lemm:closeness}
Suppose asumption \ref{asp:init},  $\|\theta_{n+1}^N(0)- \theta_n^N(0)\| \leq \frac{C_0}N$ for some $C_0 >0 $ and that $\|\theta_{n}^N(0)- \theta_{2n}^{2N}(0)\| = O(\frac{1}{N})$. 
Then $\forall t \in \RR_+$, $\|\theta_n^N(t)- {\theta}_{2n}^{2N}(t)\| =O(\frac1N)$.
\end{lem}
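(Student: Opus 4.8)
The plan is to compare the depth-$N$ flow with the depth-$2N$ flow by treating the $2N$-network as a "refinement" of the $N$-network and controlling the growth of the discrepancy $\delta_n^N(t) \eqdef \theta_n^N(t) - \theta_{2n}^{2N}(t)$ via a Grönwall-type argument in the optimization time $t$. First I would record what we already have at our disposal: by Prop.~\ref{prop:smoothness}, both trajectories stay in the ball $\|\theta\| < \tfrac12$ for all $t$, both have successive differences bounded by $\tfrac CN$ (discrete Lipschitz in depth), and the products $\Pi^N, \Pi_{:n}^N, \Pi_{n:}^N$ are therefore uniformly bounded in operator norm, uniformly in $n,N,t$. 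These uniform bounds are what make the whole comparison tractable, since they let me treat the maps appearing in the gradient as Lipschitz with constants independent of $N$.

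The key step is to write down the time derivative of $\delta_n^N$. Since $\tfrac{d}{dt}\theta_n^N = -N\nabla_n^N$ with $N\nabla_n^N = \Pi_{:n}^{N\top}(\Pi^N - B)\Sigma\,\Pi_{n:}^{N\top}$, and similarly for the $2N$-network evaluated at index $2n$, I would subtract the two expressions and add/subtract intermediate terms so that the difference decomposes into three pieces: one controlled by $\Pi^N - \Pi^{2N}$ (the discrepancy of the full products), and two controlled by $\Pi_{:n}^N - \Pi_{:2n}^{2N}$ and $\Pi_{n:}^N - \Pi_{2n:}^{2N}$ (the partial products). Each of these product-differences must in turn be bounded in terms of the $\delta_k^N$'s using a telescoping/product-perturbation estimate: the difference of two products $\prod(I + \tfrac{a_k}{N})$ and $\prod(I+\tfrac{b_k}{N})$ is controlled by $\tfrac1N\sum_k \|a_k - b_k\|$ up to the uniform bounds above. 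The subtlety is that the $2N$-product has twice as many factors, so I must compare $(I + \tfrac{\theta_{2n-1}^{2N}}{2N})(I + \tfrac{\theta_{2n}^{2N}}{2N})$ against the single factor $(I + \tfrac{\theta_n^N}{N})$; here the discrete Lipschitz bound $\|\theta_{2n}^{2N} - \theta_{2n-1}^{2N}\| \le \tfrac{C}{2N}$ and the expansion $(I + \tfrac{a}{2N})(I + \tfrac{b}{2N}) = I + \tfrac{a+b}{2N} + O(\tfrac1{N^2})$ produce an extra $O(\tfrac1{N^2})$ discretization error per pair of factors, which sums over the $N$ pairs to an overall $O(\tfrac1N)$ forcing term.

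Assembling these estimates, I expect to obtain a differential inequality of the schematic form
\[
\frac{d}{dt}\Big(\sup_n \|\delta_n^N(t)\|\Big) \;\le\; A\,\sup_n\|\delta_n^N(t)\| \;+\; \frac{B}{N},
\]
with $A,B>0$ independent of $N$ and $t$. Combined with the hypothesis $\sup_n\|\delta_n^N(0)\| = O(\tfrac1N)$, a naive Grönwall bound would give $O(\tfrac1N)e^{At}$, which is not uniform in $t$; so the genuinely hard part is to close the argument uniformly in $t$. For this I would not use the generic Grönwall constant $A$ coming from the Lipschitz bound on the gradient, but instead exploit the linear convergence of the loss (Assumption~\ref{asp:init}, as in Prop.~\ref{prop:smoothness}): the factor multiplying $\sup_n\|\delta_n^N\|$ must be shown to be integrable in $t$, because the relevant terms carry a factor $\Pi^N - B$ (or its exponentially decaying surrogate $\sqrt{\ell^N(t)} \lesssim e^{-ct}$). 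Tracking which contributions decay like $\sqrt{\ell^N(t)}$ and which do not, and arguing that the non-decaying contributions either cancel or act contractively, is the crux; once $\int_0^\infty A(t)\,dt < \infty$ uniformly in $N$, Grönwall yields the uniform $O(\tfrac1N)$ bound and the lemma follows.
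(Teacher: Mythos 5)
Your plan follows the paper's proof quite closely: the 2-by-2 regrouping of the $2N$-product into $N$ factors $(I+\tilde\theta^N_n/N)$ with $\tilde\theta^N_n=\theta^{2N}_{2n}+O(\frac1N)$, the three-term decomposition of $N(\nabla^N_n-\tilde\nabla^N_n)$ into pieces controlled by the differences of the full and partial products, the product-perturbation bound $\|\Pi^N_{:n}-\tilde\Pi^N_{:n}\|\leq K D^N$ with $D^N=\frac1N\sum_n\|\theta^N_n-\tilde\theta^N_n\|$, and the observation that a naive Gr\"onwall constant would blow up in $t$ so one must exploit the exponential decay $\sqrt{\ell^N(t)}\lesssim e^{-ct}$ guaranteed by Assumption~\ref{asp:init}. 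All of this matches the paper.

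However, there is one genuine gap, and it sits exactly where you locate ``the crux.'' Two of the three terms in the gradient-difference decomposition carry the factor $(\Pi^N-B)$ and are therefore bounded by $\alpha K D^N\sqrt{\ell^N}$, which is integrable in $t$; but the third term, $\tilde\Pi_{:n}^{N\top}(\Pi^N-\tilde\Pi^N)\Sigma\tilde\Pi_{n:}^{N\top}$, carries no such decaying factor, and saying that it ``either cancels or acts contractively'' is an assertion, not an argument — without it the Gr\"onwall bound is $O(\frac1N)e^{At}$ and the lemma fails. The paper closes this by deriving a separate damped differential inequality for $\Delta^N=\Pi^N-\tilde\Pi^N$: writing $\dot\Delta^N=-H(\Delta^N)+R$ with $\|R\|\leq aK'D^N\sqrt{\ell^N}$ and
\begin{equation*}
H(\Delta)=\frac1N\sum_{n=1}^N\tilde\Pi^N_{:n}\tilde\Pi_{:n}^{N\top}\,\Delta\,\Sigma\,\tilde\Pi_{n:}^{N\top}\tilde\Pi^N_{n:},
\end{equation*}
one uses that all weights stay in the ball of radius $\frac12$ (Prop.~\ref{prop:smoothness}) so the partial products have singular values bounded below, and $\Sigma\succeq mI$, to get the coercivity $\langle\Delta,H(\Delta)\rangle\geq\lambda\|\Delta\|^2$ with $\lambda>0$ uniform in $N$ and $t$ — the same PL-type estimate used to prove linear convergence of the loss. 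This yields $\frac{\dd}{\dd t}\|\Delta^N\|\leq aK'D^N\sqrt{\ell^N}-\lambda\|\Delta^N\|$, hence $\int_0^t\|\Delta^N\|\leq\frac1\lambda\int_0^t aK'D^N\sqrt{\ell^N}+O(\frac1N)$, and substituting this back turns the whole forcing into $\mu\int_0^t D^N\sqrt{\ell^N}+O(\frac1N)$, to which Gr\"onwall applies uniformly in $t$ since $\sqrt{\ell^N}$ is integrable. If you supply this coercivity step (and take a maximizing index $n(t)$ so the inequality closes on $\sup_n\|\delta^N_n\|$ as you propose), your argument becomes the paper's proof.
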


\section{Adjoint Method in Residual Networks}\label{sec:adjoint}
In this section, we focus on a particularly useful feature of Neural ODEs and its applicability to ResNets: their memory free backpropagation thanks to the adjoint method.
We consider a ResNet \eqref{eq:ResNet} and try to invert it using reverse mode Euler discretization of the Neural ODE \eqref{eq:Neural_ode} when $\phi_{\Theta}$ is any smooth interpolation of the ResNet.
This corresponds to defining $\tilde{x}_{N} = x_N$ and iterate for $n \in \{N-1, \dots 0\}$: 
\begin{equation}\label{eq:backward_pass}
\tilde{x}_{n} = \tilde{x}_{n+1} - \frac1N f(\tilde{x}_{n + 1}, \theta^N_{n}).
\end{equation}
We then use the approximated activations $(\tilde{x}_n)_{n\in[N]}$ as a proxy for the true activations $(x_n)_{n\in[N]}$ to compute gradients without storing the activations:
\begin{equation}\label{eq:backprop_proxy}
    \tilde{\nabla}_{\theta^N_{n-1}}L =  \frac{1}{N}[\partial_\theta f(\tilde{x}_{n-1}, \theta^N_{n-1})]^\top \nabla_{\tilde{x}_n}L,\quad
\nabla_{\tilde{x}_{n-1}}L = [I + \frac1N  \partial_xf(\tilde{x}_{n-1}, \theta^N_{n-1})]^\top \nabla_{\tilde{x}_n}L.
\end{equation}
The approximate recovery of the activations in Eq.~\eqref{eq:backward_pass} is implementable for \emph{any} ResNet: there is no need for particular architecture or forward rule modification. The drawback is that the recovery is only approximate. We devote the remainder of the section to the study of the corresponding errors and to error reduction using second order Heun's method.
We first show that, if $f(., \theta^N_n)$ and its derivative are bounded by a constant independent of $N$, then the error for reconstructing the activations in the backward scheme \eqref{eq:backward_pass} is $O(\frac1N)$. Proofs of the theoretical results are in appendix \ref{app:proofs}.
\paragraph{Error for reconstructing activations.}
We consider the following assumption:
\begin{asp}\label{asp:bound}
There exists constants $C_f$ and $L_{f}$ such that $\forall N \in \NN$, $\forall n \in [N-1]$, $\|f(., \theta_n^N)\|_{\infty} \leq C_f$ and $\|\reb{\partial_x}[f(., \theta^N_n)]\|_{\infty} \leq L_{f}$. 
\end{asp}
Then the error made by reconstructing the activations is in $O(\frac{1}{N})$.
\begin{prop}[Reconstruction error]\label{prop:reconstruction_error}
With assumption \ref{asp:bound}, one has
$
\| x_n - \Tilde{x}_{n}\| \leq \frac{(e^{L_f} - 1)C_f }{N} + O(\frac{1}{N^2}).
$
\end{prop}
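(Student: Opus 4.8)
The plan is to set up a backward error recursion for $e_n \eqdef \|x_n - \tilde{x}_n\|$ and to close it as a discrete Grönwall (geometric-sum) inequality with initial condition $e_N = 0$ (since $\tilde{x}_N = x_N$). First I would rewrite the forward rule Eq.~\eqref{eq:ResNet} as $x_n = x_{n+1} - \frac1N f(x_n, \theta_n^N)$ and compare it term by term with the reconstruction step Eq.~\eqref{eq:backward_pass}, $\tilde{x}_n = \tilde{x}_{n+1} - \frac1N f(\tilde{x}_{n+1}, \theta_n^N)$. Subtracting gives
\begin{equation*}
x_n - \tilde{x}_n = (x_{n+1} - \tilde{x}_{n+1}) - \frac1N\big[f(x_n, \theta_n^N) - f(\tilde{x}_{n+1}, \theta_n^N)\big],
\end{equation*}
so the whole problem reduces to controlling the residual difference $f(x_n, \theta_n^N) - f(\tilde{x}_{n+1}, \theta_n^N)$.

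The key step is to split this difference into a \emph{local} truncation term and a \emph{propagation} term,
\begin{equation*}
f(x_n, \theta_n^N) - f(\tilde{x}_{n+1}, \theta_n^N) = \big[f(x_n, \theta_n^N) - f(x_{n+1}, \theta_n^N)\big] + \big[f(x_{n+1}, \theta_n^N) - f(\tilde{x}_{n+1}, \theta_n^N)\big].
\end{equation*}
For the local term, I would use $\|x_{n+1} - x_n\| = \frac1N\|f(x_n,\theta_n^N)\| \le \frac{C_f}{N}$ (the $\|f\|_\infty$ bound of Assumption~\ref{asp:bound}) together with the Lipschitz bound $\|\partial_x f\|_\infty \le L_f$ to get $\|f(x_n,\theta_n^N) - f(x_{n+1},\theta_n^N)\| \le \frac{L_f C_f}{N}$; for the propagation term the same Lipschitz bound gives $\|f(x_{n+1},\theta_n^N) - f(\tilde{x}_{n+1},\theta_n^N)\| \le L_f e_{n+1}$. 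Combining, I obtain the backward recursion
\begin{equation*}
e_n \le \Big(1 + \frac{L_f}{N}\Big) e_{n+1} + \frac{L_f C_f}{N^2}, \qquad e_N = 0.
\end{equation*}

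Iterating this recursion from $n=N$ downwards sums a geometric series and yields $e_n \le \frac{C_f}{N}\big[(1 + \tfrac{L_f}{N})^{N-n} - 1\big]$; bounding $(1+\tfrac{L_f}{N})^{N-n} \le (1+\tfrac{L_f}{N})^N$ and using the expansion $(1 + \tfrac{L_f}{N})^N = \exp\!\big(N\log(1 + \tfrac{L_f}{N})\big) = e^{L_f}\big(1 + O(\tfrac1N)\big)$ gives the announced $\frac{(e^{L_f}-1)C_f}{N} + O(\frac{1}{N^2})$. The computations are routine; the one point genuinely requiring care is the clean separation of the one-step local error from the accumulated error, because the gain hinges on the extra $\frac1N$ factor in the local term (coming from $\|x_{n+1}-x_n\| = O(\frac1N)$): it is precisely this factor that makes the $N$-fold accumulation collapse to an $O(\frac1N)$ global error rather than the $O(1)$ one might naively fear. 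The uniform-in-$(n,N)$ bounds of Assumption~\ref{asp:bound} are exactly what guarantees $C_f, L_f$ are $N$-independent throughout, so that the geometric sum and its limit $(1+\tfrac{L_f}{N})^N \to e^{L_f}$ produce an $N$-independent prefactor.
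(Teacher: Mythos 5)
Your proof is correct and follows essentially the same route as the paper's: a backward recursion for the reconstruction error with terminal condition $e_N=0$, closed by the discrete Gr\"onwall / geometric-sum argument, yielding the same constant $\frac{(e^{L_f}-1)C_f}{N}$. The only difference is that you bound the one-step local term $f(x_n,\theta_n^N)-f(x_{n+1},\theta_n^N)$ by an exact split plus the Lipschitz bound $L_f\|x_{n+1}-x_n\|\le \frac{L_fC_f}{N}$, whereas the paper Taylor-expands $f$ around $x_{n+1}$ and carries an explicit $-\frac1{N^2}\partial_x f[f]$ term with an $O(\frac1{N^3})$ remainder; your variant is marginally cleaner since it uses only the first-order regularity stated in Assumption~\ref{asp:bound}.
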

Prop. \ref{prop:reconstruction_error} shows a slow convergence of the error for recovering activations.
This bound does not depend on the discrete derivative $ f(.,\theta_{n+1}^N) - f(.,\theta_{n}^N)$, contrarily to the errors between the ResNet activations and the trajectory of the interpolating Neural ODE in Prop~\ref{prop:approx_error}. In summary, even though regularity in depth is necessary to imply closeness to a Neural ODE, it is not necessary to recover activations, and neither gradients, as  we now show. 
\paragraph{Error in gradients when using the adjoint method.}
We use the result obtained in Prop. \ref{prop:reconstruction_error} to derive a bound  in $O(\frac{1}{N^2})$ on the error made for computing gradients using formulas \eqref{eq:backprop_proxy}.
\begin{prop}[Gradient error]\label{prop:gradient_error}
Suppose assumption \ref{asp:bound}. Suppose in addition that  $\partial_xf(., \theta)$ admits a Lipschitz constant $L_{df}$, $\partial_\theta f(., \theta)$ admits a Lipschitz constant $\Delta$, and an upper bound $\Omega$, all of which are independent of $\theta$. Then one has $\| \tilde{\nabla}_{\theta^N_n}L - \nabla_{\theta^N_n}L\| = O(\frac{1}{N^2}).$
\end{prop}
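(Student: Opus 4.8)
The plan is to subtract the exact backpropagation formula~\eqref{eq:backprop} from its proxy~\eqref{eq:backprop_proxy} and reduce the statement to two separate error estimates: one on the reconstructed activations, which is already available through Prop.~\ref{prop:reconstruction_error}, and one on the backward adjoint states $\nabla_{x_n}L$. Writing $A_n = \partial_\theta f(x_{n-1},\theta^N_{n-1})$, $\tilde A_n = \partial_\theta f(\tilde x_{n-1},\theta^N_{n-1})$, $g_n = \nabla_{x_n}L$ and $\tilde g_n = \nabla_{\tilde x_n}L$, the difference of gradients splits by a telescoping trick as
\begin{equation*}
\tilde\nabla_{\theta^N_{n-1}}L - \nabla_{\theta^N_{n-1}}L = \frac1N\left[(\tilde A_n - A_n)^\top \tilde g_n + A_n^\top(\tilde g_n - g_n)\right].
\end{equation*}
Because of the $\frac1N$ prefactor, it suffices to show that the bracketed term is $O(\tfrac1N)$, and I would bound each summand separately.

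For the first summand, the Lipschitz assumption on $\partial_\theta f$ gives $\|\tilde A_n - A_n\| \le \Delta\,\|\tilde x_{n-1} - x_{n-1}\|$, and Prop.~\ref{prop:reconstruction_error} bounds this reconstruction error by $O(\tfrac1N)$ uniformly in $n$; combined with a uniform bound on $\|\tilde g_n\|$ (established below) this yields $\|(\tilde A_n - A_n)^\top \tilde g_n\| = O(\tfrac1N)$. For the second summand I use the upper bound $\|A_n\| \le \Omega$, so the whole problem reduces to proving that the adjoint-state error $\|\tilde g_n - g_n\|$ is $O(\tfrac1N)$ uniformly in $n$.

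This last estimate is the heart of the proof, and I would establish it by a discrete backward Grönwall argument. Setting $\delta_n = \tilde g_n - g_n$, the two backward recursions in~\eqref{eq:backprop} and~\eqref{eq:backprop_proxy} share the terminal value $\delta_N = 0$ (since $\tilde x_N = x_N$) and satisfy
\begin{equation*}
\delta_{n-1} = \bigl[I + \tfrac1N\partial_x f(\tilde x_{n-1},\theta^N_{n-1})\bigr]^\top \delta_n + \tfrac1N\bigl[\partial_x f(\tilde x_{n-1},\theta^N_{n-1}) - \partial_x f(x_{n-1},\theta^N_{n-1})\bigr]^\top g_n.
\end{equation*}
The multiplicative factor has norm at most $1 + \tfrac{L_f}N$ by Assumption~\ref{asp:bound}, while the additive source term is $O(\tfrac1{N^2})$: the Lipschitz constant $L_{df}$ of $\partial_x f$ times the $O(\tfrac1N)$ reconstruction error times the $\tfrac1N$ prefactor, using that $\|g_n\|$ is uniformly bounded. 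That boundedness follows from iterating $\|g_{n-1}\| \le (1+\tfrac{L_f}N)\|g_n\|$ backward from $g_N$, which gives $\|g_n\| \le e^{L_f}\|g_N\|$, with $\|g_N\|=\|\nabla L(x_N)\|$ uniformly bounded since under Assumption~\ref{asp:bound} the activations stay in a fixed compact; the same estimate then controls $\|\tilde g_n\|$ through $\delta_n$. Iterating the recursion on $\delta_n$ from $\delta_N=0$ produces a geometric sum of at most $N$ terms, each $O(\tfrac1{N^2})$, with amplification bounded by $(1+\tfrac{L_f}N)^N \le e^{L_f}$, whence $\|\delta_n\| = O(\tfrac1N)$.

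The main obstacle is precisely this backward accumulation of errors: each step contributes only $O(\tfrac1{N^2})$, but there are $N$ steps and the adjoint recursion can amplify by up to $e^{L_f}$, so the delicate point is to verify that the geometric growth factor stays uniformly bounded in $N$ and does not erode the gain from the per-step $O(\tfrac1{N^2})$ error. Once $\|\delta_n\| = O(\tfrac1N)$ is in hand, substituting the two summand bounds back into the displayed decomposition and using the $\tfrac1N$ prefactor gives $\|\tilde\nabla_{\theta^N_n}L - \nabla_{\theta^N_n}L\| = O(\tfrac1{N^2})$, uniformly in $n$.
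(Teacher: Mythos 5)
Your proposal is correct and follows essentially the same route as the paper: the paper likewise first bounds the adjoint-state discrepancy $\nabla_{\tilde x_n}L-\nabla_{x_n}L$ by $O(\tfrac1N)$ via a backward discrete Gr\"onwall argument whose per-step source term is $\tfrac1N L_{df}\|\tilde x_n - x_n\|\,g = O(\tfrac1{N^2})$ by Prop.~\ref{prop:reconstruction_error}, and then splits the parameter-gradient error into the same two summands (one controlled by $\Delta$ times the reconstruction error, the other by $\Omega$ times the adjoint-state error), each $O(\tfrac1N)$, with the overall $\tfrac1N$ prefactor giving $O(\tfrac1{N^2})$. Your treatment is in fact slightly more careful than the paper's in that it explicitly justifies the uniform bound on $\|g_n\|$ and $\|\tilde g_n\|$ rather than positing it.
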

For a proof, see appendix \ref{proof:gradient_error}, where we give the dependency of our upper bound as a function of $\Delta, L_f, C_f, \Omega$ and $L_{df}$.

\paragraph{Smoothness-dependent reconstruction with Heun's method.}
The bounds in Prop. \ref{prop:reconstruction_error} and \ref{prop:gradient_error} do not depend on the smoothness with respect to the weights of the $f(., \theta_n^N)$. Only the magnitude of the residuals plays a role in the correct recovery of the activations and estimation of the gradient. Hence, there is no apparent benefit of having such a network behave like a Neural ODE. We now turn to Heun's method, a second order integration scheme, and show that in this case smoothness in depth of the network improves activation recovery. A HeunNet~\citep{maleki2021heunnet} of depth $N$ with parameters $\theta_1^N, \dots, \theta_N^N$ iterates for $n=0,\dots, N-1$:
\begin{equation}
\label{eq:heun_fwd}
    y_{n} = x_n + \frac1N f(x_n,\theta_n^N)\quad \text{and} \quad x_{n+1} = x_n +\frac1{2N}(f(x_n, \theta_n^N) + f(y_n, \theta_{n + 1}^N)).
\end{equation}
These forward iterations can once again be approximately reversed by doing for $n=N-1,\dots,0$:
\begin{equation}
\label{eq:heun_bwd}
    \tilde{y}_{n} = \tilde{x}_{n+1} - \frac1N f(\tilde{x}_{n+1},\theta_{n + 1}^N)\quad \text{and} \quad \tilde{x}_{n} = \tilde{x}_{n+1} -\frac1{2N}(f(\tilde{x}_{n + 1}, \theta_{n+1}^N) + f(\tilde{y}_n, \theta_{n}^N)),
\end{equation}
which also enables approximated backpropagation without storing activations. When discretizing an ODE, Heun's method has a better $O(\frac1{N^2})$ error, hence we expect a better recovery than in Prop. \ref{prop:reconstruction_error}. Indeed, we have:
\begin{prop}[Reconstruction error - Heun's method]\label{prop:reconstruction_error_heun}
Assume assumption \ref{asp:bound}. Denote by $L_f'$ the Lipschitz constant of $x \mapsto \frac12(f(x,\theta_{n+1}^N) + f(x - \frac1Nf(x, \theta^N_{n+1}) ,\theta_{n}^N))$, by $L_\theta$ the Lispchitz constant of $\theta \mapsto f(\cdot, \theta)$ and by $L'_{\theta}$ that of $\theta \mapsto \partial_xf(., \theta)$. Let $C'_f= \frac14L_\theta'L_{\theta}$. Finally, define $\Delta_\theta^N \eqdef \max_{n}\|\theta_{n+1}^N -\theta_n^N\|^2 $.
Using Heun's method, we have:
$
\|x_n - \tilde{x}_n\| \leq \frac{(e^{L_f'} - 1)C'_f}{L_f'N}\times\Delta_{\theta}^N  + O(\frac1{N^2}).
$
\end{prop}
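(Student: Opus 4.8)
The plan is to follow the same accumulate-the-local-errors strategy as for Prop.~\ref{prop:reconstruction_error}, but this time to extract the \emph{precise leading order} of the per-step reconstruction error, which is where the smoothness in depth enters. First I would rewrite one backward Heun step \eqref{eq:heun_bwd} as a single map: substituting $\tilde y_n$ into the update for $\tilde x_n$ gives $\tilde x_n = B_n(\tilde x_{n+1})$ with $B_n(z) \eqdef z - \frac1N g_n(z)$ and $g_n(z) = \frac12\big(f(z,\theta_{n+1}^N) + f(z - \frac1N f(z,\theta_{n+1}^N),\theta_n^N)\big)$, whose Lipschitz constant is exactly $L_f'$ by definition. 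Since $f$ is bounded by $C_f$ (Assumption~\ref{asp:bound}), all iterates $x_n, \tilde x_n, y_n, \tilde y_n$ move by steps of size $O(\frac1N)$ and hence remain in a fixed compact set $K$ independent of $N$, so every Lipschitz and boundedness constant applies uniformly. Writing $\rho_n \eqdef x_n - B_n(x_{n+1})$ for the one-step (local) reconstruction error and $e_n \eqdef \|x_n - \tilde x_n\|$, the bound $\|B_n(z) - B_n(z')\| \le (1 + \frac{L_f'}{N})\|z - z'\|$ together with $e_N = 0$ yields the recursion $e_n \le (1 + \frac{L_f'}{N}) e_{n+1} + \|\rho_n\|$.

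The heart of the proof is to show $\|\rho_n\| \le \frac{C_f'}{N^2}\Delta_\theta^N + O(\frac1{N^3})$. I would compute $\rho_n$ by Taylor expanding in $\epsilon = \frac1N$ around $x_n$. Expanding the forward step \eqref{eq:heun_fwd} gives $x_{n+1} = x_n + \frac{\epsilon}{2}(f(x_n,\theta_n^N) + f(x_n,\theta_{n+1}^N)) + O(\epsilon^2)$, and expanding $B_n(x_{n+1})$ to order $\epsilon^2$ one finds, after collecting terms, that the order-$\epsilon$ contributions cancel and the order-$\epsilon^2$ contribution collapses to $\frac{\epsilon^2}{4}\big(\partial_x f(x_n,\theta_{n+1}^N) - \partial_x f(x_n,\theta_n^N)\big)\big[f(x_n,\theta_n^N) - f(x_n,\theta_{n+1}^N)\big]$. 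Crucially, this leading term \emph{vanishes} when $\theta_n^N = \theta_{n+1}^N$, which is precisely the gain over the first-order scheme. Bounding it with the two weight-Lipschitz constants, $\|\partial_x f(x_n,\theta_{n+1}^N) - \partial_x f(x_n,\theta_n^N)\| \le L_\theta'\|\theta_{n+1}^N - \theta_n^N\|$ and $\|f(x_n,\theta_n^N) - f(x_n,\theta_{n+1}^N)\| \le L_\theta\|\theta_{n+1}^N - \theta_n^N\|$, gives $\|\rho_n\| \le \frac{1}{4N^2}L_\theta' L_\theta \|\theta_{n+1}^N - \theta_n^N\|^2 + O(\frac1{N^3}) \le \frac{C_f'}{N^2}\Delta_\theta^N + O(\frac1{N^3})$, using $C_f' = \frac14 L_\theta' L_\theta$.

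Finally I would unroll the recursion from $e_N = 0$ to get $e_n \le \sum_{k=n}^{N-1}(1+\frac{L_f'}{N})^{k-n}\|\rho_k\|$, and bound the geometric sum by $\frac{N}{L_f'}\big((1+\frac{L_f'}{N})^{N} - 1\big) \le \frac{N}{L_f'}(e^{L_f'} - 1)$. Multiplying by the per-step bound yields $e_n \le \frac{(e^{L_f'}-1)C_f'}{L_f' N}\Delta_\theta^N + O(\frac1{N^2})$, as claimed. The main obstacle is the second step: establishing the exact cancellation of the $O(\epsilon)$ term and of the constant-weight part of the $O(\epsilon^2)$ term, so that only the difference-of-weights structure $(\partial_x f_{\theta_{n+1}} - \partial_x f_{\theta_n})[f_{\theta_n} - f_{\theta_{n+1}}]$ survives — this requires careful bookkeeping of all second-order Taylor terms — and then controlling the $O(\frac1{N^3})$ remainder uniformly in $n$, which relies on the boundedness of $f$ and its derivatives on $K$ and on enough smoothness of $f$ in $x$ to justify the expansion.
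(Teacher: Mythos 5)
Your proposal is correct and follows essentially the same route as the paper's proof: the map you call $B_n$ is exactly the paper's $\psi_n$-based backward step, your local error $\rho_n$ equals $\frac1N\bigl(\psi_n(x_n+\frac1N\phi_n(x_n))-\phi_n(x_n)\bigr)$ whose leading term the paper isolates in a dedicated lemma as $\frac1{4N}\bigl(J_{n+1}-J_n\bigr)[f_{n+1}-f_n]$, and the Lipschitz-propagation plus discrete-Gronwall unrolling is identical. The only differences are cosmetic (an overall sign in the leading term, which is immaterial after taking norms, and your slightly more explicit handling of the compact set on which the constants are uniform).
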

This bound is very similar to that in proposition~\ref{prop:reconstruction_error}, with an additional factor $\Delta_\theta^N$.
Hence, we see that under the condition that  $\Delta_\theta^N = O(\frac1N)$, the reconstruction error  $\|x_n -\tilde{x}_n\|$ is in $O(\frac1{N^2})$. In the linear case, we have proven under some hypothesis in Prop.~\ref{prop:smoothness} that such a condition on $\Delta_\theta^N$ holds during training. Consequently, the smoothness of the weights of a HeunNet in turns helps it recover the activations, while it is not true for a ResNet.
This provides better guarantees on the error on gradients:
\begin{prop}[Gradient error - Heun's method]\label{prop:gradient_error_heun}
Suppose assumption \ref{asp:bound}. Suppose in addition that  $\partial_xf(., \theta)$ admits Lipschitz constant, $\partial_\theta f(., \theta)$ admits a Lipschitz constant and an upper bound, all of which are independent of $\theta$. Then one has $\| \tilde{\nabla}_{\theta^N_n}L - \nabla_{\theta^N_n}L\| = O(\frac{\Delta_\theta^N}{N^2} + \frac1{N^3}).$
\end{prop}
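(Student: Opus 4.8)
The plan is to mirror the proof of Proposition~\ref{prop:gradient_error}, the only genuinely new ingredient being the sharper activation bound of Proposition~\ref{prop:reconstruction_error_heun}, namely $\|x_n - \tilde{x}_n\| = O(\frac{\Delta_\theta^N}{N} + \frac{1}{N^2})$, in place of the $O(\frac1N)$ bound available in the first-order case. First I would write out the exact backpropagation through the Heun forward iteration~\eqref{eq:heun_fwd}. This produces an adjoint recursion for $\nabla_{x_n}L$ (passing through the intermediate adjoint of $y_n$) in which each backward step multiplies the incoming adjoint by a Jacobian of the form $I + O(\frac1N)$, and a gradient formula $\nabla_{\theta_n^N}L$ that carries an explicit $\frac1N$ prefactor. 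The approximate quantities $\tilde{\nabla}_{\theta_n^N}L$ and $\nabla_{\tilde{x}_n}L$ are obtained from the identical formulas but evaluated at the reconstructed activations $\tilde{x}_n,\tilde{y}_n$ from~\eqref{eq:heun_bwd}. Since $\tilde{x}_N = x_N$, the terminal adjoints coincide, so the adjoint error starts from zero.

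Next I would control the adjoint error $\epsilon_n \eqdef \|\nabla_{x_n}L - \nabla_{\tilde{x}_n}L\|$ by a discrete Gronwall argument. Using Assumption~\ref{asp:bound}, the per-step Jacobians are bounded and their product over the $N$ backward steps stays uniformly bounded, via $(1+\frac cN)^N \leq e^c$; this simultaneously shows that the adjoint states $\nabla_{x_n}L$ and $\nabla_{\tilde{x}_n}L$ remain bounded uniformly in $n$ and $N$. Splitting each backward step into a propagation term (the previous adjoint error times a Jacobian $I+O(\frac1N)$) and a source term (the mismatch $\partial_x f(x_n,\theta_n^N) - \partial_x f(\tilde{x}_n,\theta_n^N)$, which by the Lipschitz continuity of $\partial_x f$ and boundedness of the adjoint is $\frac1N \cdot O(\|x_n - \tilde{x}_n\|)$), I would sum over the $N$ steps. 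The geometric factor contributes an $O(1)$ constant and the $\frac1N$ per-step weight cancels the number of steps, so that $\epsilon_n$ inherits the order of the activation error: $\epsilon_n = O(\frac{\Delta_\theta^N}{N} + \frac{1}{N^2})$.

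Finally I would bound the gradient error directly by decomposing $\nabla_{\theta_n^N}L - \tilde{\nabla}_{\theta_n^N}L$ into $\frac1N$ times a term linear in $\epsilon_n$ (controlled by the upper bound $\Omega$ on $\partial_\theta f$) plus $\frac1N$ times a term linear in $\|x_n - \tilde{x}_n\|$ (controlled by the Lipschitz constant $\Delta$ of $\partial_\theta f$ and the bounded adjoint). Both contributions are $\frac1N \cdot O(\frac{\Delta_\theta^N}{N} + \frac{1}{N^2}) = O(\frac{\Delta_\theta^N}{N^2} + \frac{1}{N^3})$, which gives the claim. I expect the main obstacle to be bookkeeping the two-stage structure of Heun's backpropagation: the intermediate variable $y_n$ and its own adjoint introduce cross terms, and I would need to verify that the error from the $\tilde{y}_n$ reconstruction is consistent with Proposition~\ref{prop:reconstruction_error_heun} and that none of these cross terms reintroduces a stray $O(\frac1N)$ that would degrade the second-order gain. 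Ensuring the Gronwall constant and all intermediate bounds are uniform in $N$ is the delicate part of the estimate.
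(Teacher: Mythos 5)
Your proposal is correct and follows essentially the same route as the paper's proof: a discrete Gronwall argument propagating the adjoint error with a source term of size $\frac1N\cdot O(\|x_n-\tilde x_n\|)$ fed by Proposition~\ref{prop:reconstruction_error_heun}, followed by a $\frac1N$-Lipschitz bound on the map from activations and adjoints to the parameter gradient. The two-stage bookkeeping you flag is handled in the paper by writing the one-step map as $\phi_n$ and the parameter gradient as a single Lipschitz function $\Psi(x_n,x_{n-1},\nabla_{x_n}L,\nabla_{x_{n-1}}L)$ with constant $L_\Psi/N$, which absorbs the cross terms exactly as you anticipate.
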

Just like with activation, we see that Heun's method allows for a better gradient estimation when the weights are smooth with depth. Equivalently, for a fixed depth, this proposition indicates that HeunNets have a better estimation of the gradient with the adjoint method than ResNets which ultimately leads to better training and overall better performances by such memory-free model.
\section{Experiments}\label{sec:experiments}
We now present experiments to investigate the applicability of the results presented in this paper. We use Pytorch \citep{paszke2017automatic} and Nvidia Tesla V100 GPUs. Our code will be open-sourced. All the experimental details are given in appendix \ref{app:exp_details}, and we provide a recap on ResNet architectures in appendix \ref{app:resnet_recap}.
\begin{figure}[h]
\centering
\includegraphics[width=1\columnwidth]{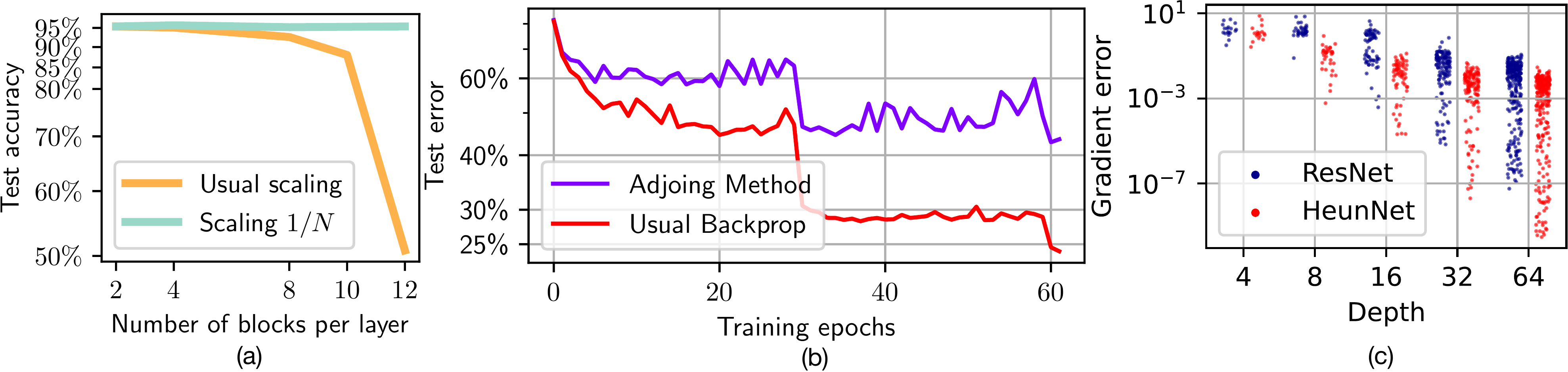} 
\caption{\small{(a) \textbf{Test accuracy on CIFAR-10} as a function of the number of blocks in each layer of the ResNet. Within each layer, weights are tied (\reb{3 runs}). (b) \textbf{Failure} of the adjoint method with a ResNet-101 on ImageNet (the approximated gradients are only used in the third layer of the network, that contains $23$ blocks). (c) \textbf{Relative error} between the approximated gradients using adjoint method and the true gradient, whether using a ResNet or a HeunNet. Each point corresponds to one parameter.}}\label{fig:adjoint}
\end{figure}
\subsection{Validation of our model with step size $\frac1N$}
The ResNet model \eqref{eq:ResNet} is different from the classical ResNet because of the $\frac1N$ term. This makes the model depth aware, and we want to study the impact of this modification on the accuracy on CIFAR and ImageNet.
\begin{wraptable}[5]{r}{0.4\linewidth}
	\vspace{-6mm}
	\caption{Test accuracy (ResNet-101)}
	\label{tab:perf_101}
	\centering
	\scalebox{0.9}{
		\begin{tabular}{lcccc}
			\toprule
			\multicolumn{1}{l}{} & ResNet-101 & Ours \\
			\midrule
			\multicolumn{1}{l}{CIFAR-10} & $95.5 \pm 0.1$\% & $95.5 \pm 0.1$\%  \\
			\midrule
			\multicolumn{1}{l}{ImageNet} & 77.8\% & 77.9\%  \\
		\end{tabular}
	}
\end{wraptable}
We first train a ResNet-101 \citep{he2016deep} on CIFAR-10 and ImageNet using the same hyper-parameters. Experimental details are in appendix \ref{app:exp_details}  and results are summarized in table \ref{tab:perf_101}, showing that the explicit addition of the step size $\frac1N$ does not affect accuracy.
\reb{In strike contrast, the classical ResNet rule without the scaling $\frac 1N$ makes the network behave badly at large depth, while it still works well with our scaling $\frac1N$, as shown in Figure  \ref{fig:adjoint} (a).}
 On ImageNet, the scaling $\frac1N$ also leads to similar test accuracy in the weight tied setting: $72.5 \%$ with $4$ blocks per layer, $73.2 \%$ with $8$ blocks per layer and $72 \%$ with $16$ blocks per layer (mean over $2$ runs).
\subsection{Adjoint method}
\paragraph{New training strategy.}
Our results in Prop. \ref{prop:reconstruction_error} and \ref{prop:gradient_error} assume uniform bounds in $N$ on our residual functions and their derivatives. We also formally proved in the linear setting that these assumptions hold during the whole learning process if the initial loss is small. 
A natural idea to start from a small loss is to consider a pretrained model. 
\begin{wraptable}[5]{r}{0.45\linewidth}
\vspace{-6mm}
	\caption{Test accuracy (ResNet)}
	\label{tab:perf_auto}
	\centering
	\scalebox{0.9}{
		\begin{tabular}{lcccc}
			\toprule
			\multicolumn{1}{l}{} & Before F.T. & After F.T \\
			\midrule
			\multicolumn{1}{l}{CIFAR-10} & $95.25 \pm 0.2$ \% & $95.65 \pm 0.1$ \%  \\
			\midrule
			\multicolumn{1}{l}{ImageNet} & 73.1 \% & 75.1 \%  \\
		\end{tabular}
	}
\end{wraptable}
In addition, we also want our pretrained model to verify assumption \ref{asp:bound} so we consider the following setup. On CIFAR (resp. ImageNet) we train a ResNet with 4 (resp. 8) blocks in each layer, where weights are tied within each layer.
A first observation is that one can transfer these weights to deeper ResNets without significantly affecting the test accuracy of the model: it remains above $94.5 \%$ on CIFAR-10 and $72 \%$ on ImageNet. We then \emph{untie the weights} of our models and refine them.
More precisely, for CIFAR, we then transfer the weights of our model to a ResNet with $4$, $4$, $64$ and $4$ blocks within each layer and fine-tune it only by refining the third layer, using our adjoint method.
We display in table \ref{tab:perf_auto} the median of the new test accuracy, over $5$ runs for the initial pretraining of the model. For ImageNet, we transfer the weights to a ResNet with $100$ blocks per layer and fine-tune the whole model with our adjoint method for the residual layers.
Results are summarized in table \ref{tab:perf_auto}. To the best of our knowledge, this is the first time a Neural-ODE like ResNet achieves a test-accuracy of $75.1 \%$ on ImageNet.
\paragraph{Failure in usual settings.}
In Prop. \ref{prop:reconstruction_error} we showed under assumption \ref{asp:bound}, that is if the residuals are bounded and Lipschitz continuous with constant independent of the depth $N$, then the error for computing the activations backward would scale in $\frac1N$ as well as the error for the gradients (Prop. \ref{prop:gradient_error}).
First, this results shows that the architecture needs to be deep enough, because it scales in $\frac1N$: for instance,  we fail to train a ResNet-101 \citep{he2016deep} on the ImageNet dataset using the adjoint method on its third layer (depth $23$), as shown in Figure \ref{fig:adjoint} (b).
\paragraph{Success at large depth.}
To further investigate the applicability of the adjoint method for training deeper ResNets, we train a simple ResNet model on the CIFAR data set. First, the input is processed by a $5 \times 5$ convolution with $16$ out channels, and the image is down-sampled to a size $10 \times 10$.
\begin{wrapfigure}{r}{0.4\textwidth}
\includegraphics[width=0.4\textwidth]{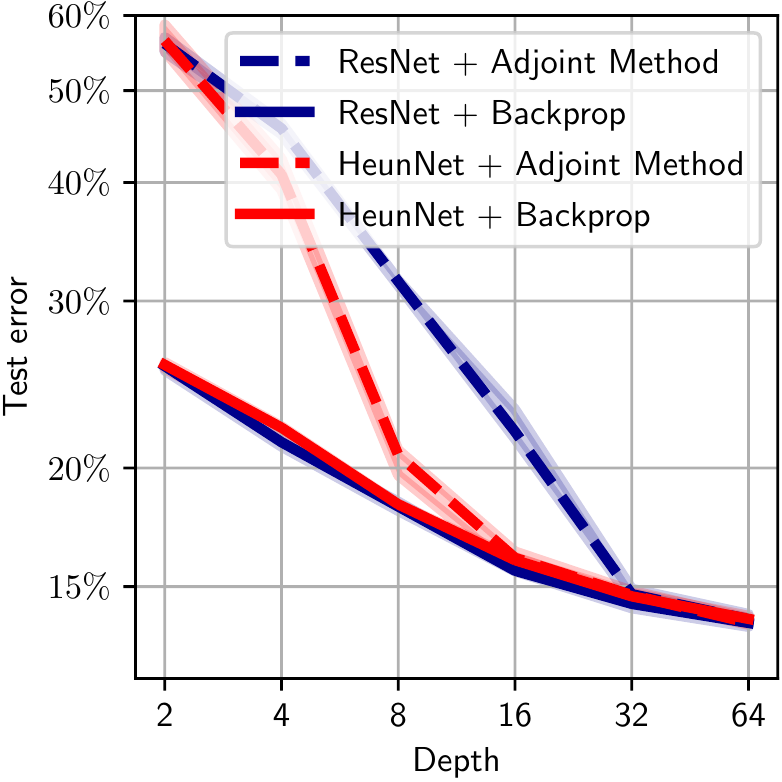}
\caption{\small{\textbf{Comparison} of the best test errors as a function of depth when using Euler or Heun's discretization method with or without the adjoint method.}}\label{fig:heun}
\vspace{-10pt}
\end{wrapfigure}
We then apply a batch norm, a ReLU and iterate relation \eqref{eq:ResNet} where $f$ is a pre-activation basic block \citep{he2016identity}. We consider the zero residual initialisation: the last batch norm of each basic block is initialized to zero. We consider different values for the depth $N$ and notice that in this setup, the deeper our model is, the better it performs in term of test accuracy. We then compare the performance of our model using a ResNet (forward rule \eqref{eq:ResNet}) or a HeunNet (forward rule \eqref{eq:heun_fwd}). 
We train our networks using either the classical backpropagation or our corresponding proxys using the adjoint method (formulas \eqref{eq:backward_pass} and \eqref{eq:heun_bwd}). We display the final test accuracy (median over $5$ runs) for different values of the depth $N$ in Figure \ref{fig:heun}. The true backpropagation gives the same curves for the ResNet and the HeunNet. 
Approximated gradients, however, lead to a large test error at small depth, but give the same performance at large depth, hence confirming our results in Prop. \ref{prop:gradient_error} and \ref{prop:gradient_error_heun}.
\reb{In addition, at fixed depth, the accuracy when training a HeunNet with the adjoint method is better (or similar at depths $2$, $32$ and $64$) than for the ResNet with the adjoint method.} This is to be linked with the two different bounds in  Prop. \ref{prop:gradient_error} and \ref{prop:gradient_error_heun}: for the HeunNet, smoothness with depth, which is expected at large depth, according to the theoretical results for the linear case (Prop. \ref{prop:smoothness}), implies a faster convergence to the true gradients for the HeunNet than for the ResNet. We finally validate this convergence in Figure \ref{fig:adjoint} (c): the deeper the architecture, the better the approximation on the gradients. In addition, the HeunNet approximates the true gradient better than the ResNet.
    \section*{Conclusion, limitations and future works}
We propose a methodology to analyze how well a ResNet discretizes a Neural ODE. The positive results predicted by our theory in the linear case are also observed in practice with real architectures: one can successfully use the adjoint method to train ResNets (or even more effectively HeunNets) using very deep architectures on CIFAR, or fine-tune them on ImageNet, without memory cost in the residual layers. However, we also show that for large scale problems such as ImageNet classification from scratch, the adjoint method fails at usual depths. 

Our work provides a theoretical guarantee for the convergence to a Neural ODE in the linear setting under a small loss initialization. A natural extension would be to study the non-linear case. In addition, the adjoint method is time consuming, and an improvement would be to propose a cheaper method than a reverse mode traversal of the architecture for approximating the activations.
\section*{Acknowledgments}

This work was granted access to the HPC resources of IDRIS under the allocation 2020-[AD011012073] made by GENCI. This work was supported in part by the French government under management of Agence Nationale de la Recherche as part of the “Investissements d’avenir” program, reference ANR19-P3IA-0001 (PRAIRIE 3IA Institute). This work was
supported in part by the European Research Council (ERC project NORIA). M. S. thanks Mathieu Blondel and Zaccharie Ramzi for helpful discussions. 

\bibliographystyle{unsrtnat}
\bibliography{samples}

\newpage
\appendix

\section*{\centering APPENDIX}

In Section~\ref{app:proofs} we give the proofs of all the propositions, lemmas and the theorem presented in this work.

Section~\ref{app:exp_details} gives details for the experiments in the paper.

We also give a recap on ResNet architectures in Section \ref{app:resnet_recap}.

\section{Proofs}\label{app:proofs}

\subsection{Proof of Prop. \ref{prop:approx_error}}\label{proof:approx_error}
Our proof is inspired by \citep{demailly2016analyse}.
\begin{proof}
We denote $h = \frac{1}{N}$ and $s_n = n h$. We define 
$$\varepsilon_n = x(s_{n+1}) - x(s_n) - h\phi_{\Theta}(x(s_n), s_n).$$ We have that  $\phi_{\Theta}(x(s_n), s_n) = \dot{x}(s_n)$. 

Taylor's formula gives 
$$x(s_n + h) = x(s_n) + h \dot{x}(s_n) + R_1(h)$$
with $ \| R_1(h) \| \leq \frac{1}{2}h^2\| \ddot{x}\|_{\infty}$. 
  This implies that  $$\|\varepsilon_n\| \leq \frac{1}{2}h^2 \| \ddot{x} \|_{\infty}.$$
The true error we are interested in is the global error $e_n = x(s_n) - x_n$. One has 
$$
e_{n+1} - e_n = x(s_{n+1}) - x(s_n) + x_n - x_{n+1}  = \varepsilon_n + h(\phi_{\Theta}(x(s_n), s_n) - \phi_{\Theta}(x_n, s_n)).
$$ 
Because $\phi_{\Theta}$ is $L$-Lipschitz, this gives $\|e_{n+1} - e_n\| \leq \|\varepsilon_n\| + hL\|e_n\| $ and hence 
$$
\|e_{n+1}\| \leq (1 + hL)\|e_n\| +  \|\varepsilon_n\|.
$$
Because $h = \frac 1N$, we have 
$$
\|e_{n+1}\| \leq (1 + \frac LN)\|e_n\| +  \frac{1}{2N^2}\|\ddot{x}\|_{\infty}.
$$
this implies from the discrete Gronwall lemma, since $e_0 = 0$ that 
$$
\|e_n\| \leq \frac{e^L-1}{2NL}\|\ddot{x}\|_{\infty}.
$$ 
Note that we have $\ddot{x} = \partial_s \phi_{\Theta} + \partial_x \phi_{\Theta} [\phi_{\Theta}].$ This gives the desired result. 
\end{proof}
\subsection{Proof of Prop. \ref{prop:smoothness}}\label{proof:smoothness}
\begin{proof}
Recall that we denote $\Pi^N = \prod_{n=1}^N (I_d + \frac{\theta_n^N}{N})$, $\Pi^N_{:n} = (I_d + \frac{\theta_N^N}{N})\dots (I_d + \frac{\theta_{n+1}^N}{N})$ and  $\Pi^N_{n:} = (I_d + \frac{\theta_{n-1}^N}{N})\dots (I_d + \frac{\theta_{1}^N}{N})$.
We denote $\nabla^N_n = \nabla_{\theta_n^N}L$. One has $$N \nabla^N_n =\Pi_{:n}^{N\top}(\Pi^N - B)\Sigma\Pi_{n:}^{N\top}.$$ One has as in \citep{zou2020global} that
$$\sigma^2_{max}(\Pi^N_{n:})\sigma^2_{max}(\Pi^N_{:n})\|\Pi - B\|_{\Sigma}^2 M \geq N^2\|\nabla^N_n\|^2 \geq \sigma^2_{min}(\Pi^N_{n:})\sigma^2_{min}(\Pi^N_{:n})\|\Pi - B\|_{\Sigma}^2m.$$
where $\sigma_{max}(A)$ (resp.  $\sigma_{min}(A)$) denotes the largest (resp. smallest) singular value of $A$.
We first show that $\forall t \in \RR_+$, $\|\theta_n^N(t)\| < \frac{1}{2}$. Denote $$t^* = \inf\{t\in\RR_+, \exists n \in[N-1], \|\theta_n^N(t) \| \geq 1/2 \}.$$
One has that $\forall t \in [0, t^*]$, $\sigma^2_{min}(\Pi^N_{:n}) \ge(1-\frac{1}{2N})^{2(N-n)}$ and $\sigma^2_{min}(\Pi^N_{n:}) \ge(1-\frac{1}{2N})^{2(n-1)}$ which implies that 
$$N^2\|\nabla^N_n(t)\|^2 \geq 2(1 - \frac{1}{2N})^{2N-2}{\ell}^N(t)m\geq \frac2e {\ell}^N(t)m.$$
Similarly one has $N^2 \|\nabla^N_n(t)\|^2 \leq 2e{\ell}^N(t)M$. To summarize, we have the PL conditions for $t\in[0, t^*]$:
$$
\frac2e m{\ell}^N(t) \leq N^2 \|\nabla^N_n(t)\|^2 \leq 2eM{\ell}^N(t).
$$
As a consequence, one has $$\frac{\dd {\ell}^N}{\dd t}(t) = - N \sum_{n=1}^{N} \| \nabla^N_n(t)\|^2 \leq -\frac{2}{e}m{\ell}^N(t)$$
and thus ${\ell}^N(t) \leq e^{-\frac2emt}{\ell}^N(0)$. 

We have $\theta^N_n(t^*) = \theta^N_n(0) + N \int_0^{t^*} \nabla^N_n$ and $\| \nabla^N_n \| \leq \frac{\sqrt{2eM}}{N} \sqrt{{\ell}^N}$ so that 
$$
\|\theta^N_n(t^*)\| \leq \| \theta^N_n(0)\| + \sqrt{2eM}\int_{0}^{t^*} e^{-\frac1emt}\sqrt{{\ell}^N(0)}dt < \frac14 + \frac14 < 1/2.
$$

This is absurd by definition of $t^*$ and thus shows that $\forall t\in\RR_+, \|\theta_n^N(t)\| < \frac{1}{2}.$  We also see that $\nabla^N_n$ is integrable so that $\theta_n^N(t)$ admits a limit as $t \to \infty$.

We now show our main result. Note that we have the relationship $(I+\frac{\theta_{n+1}^N}{N})^\top\nabla_{n+1} = \nabla_{n}(I+\frac{\theta_{n}^N}{N})^\top$ so that 
$$\nabla^N_{n+1} - \nabla^N_n = (I + \frac{\theta^{N\top}_{n+1}}{N})^{-1}(\frac{\nabla^N_n\theta_n^{N\top} - \theta_{n+1}^{N\top} \nabla_n^N}{N}).
$$
Because $\|(I +A)^{-1} \| \leq 2$ if $\|A\|\leq \frac12$ this gives $\|\nabla^N_{n+1} - \nabla^N_n \| \leq \frac2N\|\nabla^N_n\|$. Integrating we get 
$$\|\theta^N_{n+1}(t) - \theta^N_n(t) \| \leq \|\theta^N_{n+1}(0) - \theta^N_n(0) \| + 2 \int_0^t \|\nabla^N_n\| .
$$
This gives 
$$
\|\theta^N_{n+1}(t) - \theta^N_n(t) \| \leq O(\frac1N) + \frac{1}{N}2\int_0^t\sqrt{2eM{\ell}^N(0)}e^{-\frac1emt}dt = O(\frac{1}{N}),
$$
which is the desired result. 

\end{proof}
\subsection{Proof of lemma \ref{lemm:limit_functions}}\label{proof:limit_functions}
\begin{proof}
We adapt a variant of the Ascoli–Arzelà theorem \citep{brezis2011functional}. 
We showed in Prop. \ref{prop:smoothness} that there exists $C > 0$ that only depends on the initialization such that, $\forall t\geq0, \forall i \in [N-1]$, $$\|\theta^N_{n+1}(t) - \theta^N_n(t) \| \leq \frac CN.$$ 
This implies that $$\|\theta^N_{j}(t) - \theta^N_i(t) \| \leq C\frac {|j-i|}N.$$
We also have that $$\|\theta^N_{n}(t_1) - \theta^N_n(t_2) \| = \|N\int_{t_1}^{t_2}\nabla^N_n\| \leq C'|t_1 - t_2|$$ with $C' \ge 0 $. 

Its follows that  $\| \psi_N(s_1, t_1) - \psi_N(s_2, t_2) \| \leq \| \psi_N(s_1, t_1) - \psi_N(s_1, t_2) \| + \| \psi_N(s_1, t_2) - \psi_N(s_2, t_2) \|$  and thus 
$$(i) \quad \| \psi_N(s_1, t_1) - \psi_N(s_2, t_2) \| \leq C'|t_1-t_2| + C|s_1-s_2| + \frac{C}{N}.$$

We also have  
$$ (ii) \quad \forall N\in \NN, \quad \|\psi_N \|_{\infty} \leq \frac12.$$ These two properties are essential to prove our lemma. We proceed as follows.

1) First, we denote $((s_j, t_j))_{j\in\NN} = (\QQ \cap [0,1]) \times \QQ_+$. Since we have the uniform bound $(ii)$, we extract using a diagonal extraction procedure a subsequence $\psi_{\sigma(N)}$ such that $\forall j \in \NN$, $$\psi_{\sigma(N)}(s_j, t_j) \to \psi(s_j, t_j)$$ (we denote the limit $\psi(s_j, t_j)$).

2) We show the convergence $\forall s \in [0, 1]$ and $t \in \RR_+$.

Let $\varepsilon >0$, $s\in [0, 1]$ and $t \in \RR_+$. Since $((s_j, t_j))_{j\in\NN}$ is dense in $[0, 1] \times \RR_{+}$, there exists $k \in \NN$ such that $|s_k - s| < \varepsilon$ and $|t_k - t| < \varepsilon$. Let $N, M \in \NN$. 

We have  

$\|\psi_{\sigma(N)}(s, t) - \psi_{\sigma(M)}(s, t)\| \leq \|\psi_{\sigma(N)}(s,t ) - \psi_{\sigma(N)}(s_k, t_k)\| + \|\psi_{\sigma(N)}(s_k, t_k) - \psi_{\sigma(M)}(s_k, t_k)\| +\|\psi_{\sigma(M)}(s_k, t_k) - \psi_{\sigma(M)}(s, t)\|$

so that 
$$\|\psi_{\sigma(N)}(s, t) - \psi_{\sigma(M)}(s, t)\| \leq 2 C \varepsilon + 2 C'\varepsilon + \frac{C}{\sigma(N)} + \frac{C}{\sigma(M)} + \|\psi_{\sigma(N)}(s_k, t_k) - \psi_{\sigma(M)}(s_k, t_k)\|.$$
Since $(\psi_{\sigma(N)}(s_k, t_k))_{N\in\NN}$ is a Cauchy sequence, this gives for $N, M$ big enough that
$$\|\psi_{\sigma(N)}(s) - \psi_{\sigma(M)}(s, t)\| \leq (2(C+C') + 1) \varepsilon$$
and thus $(\psi_{\sigma(N)}(s, t))$ is a Cauchy sequence in $\RR^{d\times d}$. As such, it converges and one has $$\psi_{\sigma(N)}(s, t) \to \psi(s, t).$$ 

3) Recall that one has 
$$\| \psi_{\sigma(N)}(s_1, t_1) - \psi_{\sigma(N)}(s_2, t_2) \| \leq C|s_1-s_2| + \frac{C}{\sigma(N)} + C'|t_1-t_2|$$
so that letting $N \to \infty$ gives $$
\|\psi(s_1, t_1) - \psi(s_2, t_2)\| \leq C|s_1-s_2| + C'|t_1-t_2|$$
and $\psi$ is Lipschitz continuous. 

4) Let us finally show that the convergence is uniform in $(s,t)$. Let $s \in [0, 1]$, $\varepsilon >0 $ and $\delta >0$ such that if $|s-u|<\delta$, $\forall t \in \RR+$, 
$$\| \psi_N(s, t) - \psi_N(u, t) \| \leq \varepsilon + \frac{C}{N}$$ and $\| \psi(s, t) - \psi(u, t) \| \leq \varepsilon $. There exists a finite set of $\{s_j\}_{j=1}^{k}$ such that $$[0, 1] \subset \cup_{j=1}^k ]s_j - \frac{\delta}{2}, s_j + \frac{\delta}{2}[.$$ For our $s$, there exists $j \in \{ 1, \dots, k\}$ such that $\|s -s_j \| \leq \delta$.

There also exists $t_0 \geq 0$ such that if $t \geq t_0$, 
$$\|\psi_{\sigma(N)}(s, t) -  \psi_{\sigma(N)}(s, t_0)\| \leq \sqrt{2eM}\int_{t_0}^t e^{-\frac1em z}\sqrt{{\ell}^N(0)}dz \leq \varepsilon.$$
We have:

$\|\psi_{\sigma(N)}(s, t_0) - \psi(s, t_0)\| \leq  \|\psi_{\sigma(N)}(s, t_0) - \psi_{\sigma(N)}(s_j, t_0)\| + \|\psi_{\sigma(N)}(s_j, t_0) - \psi(s_j, t_0)\| + \|\psi(s_j, t_0) - \psi(s, t_0)\|$ 

and thus:

$\|\psi_{\sigma(N)}(s, t_0) - \psi(s, t_0)\| \leq 2\varepsilon + \frac{C}{\sigma(N)} + \max_{j \in \{ 1, \dots, k\}}\|\psi_{\sigma(N)}(s_j, t_0) - \psi(s_j, t_0)\| \leq 4 \varepsilon$ for $N$ big enough.

Finally, 
$\|\psi_{\sigma(N)}(s, t) - \psi(s, t)\| \leq \|\psi_{\sigma(N)}(s, t) - \psi_{\sigma(N)}(s, t_0)\| + \|\psi_{\sigma(N)}(s, t_0) - \psi(s, t_0)\| +\|\psi(s, t_0) - \psi(s, t)\| \leq 6 \varepsilon$ 

for $N$ big enough, independently of $t$ and $s$. This concludes the proof.
\end{proof}
\subsection{Proof of lemma \ref{lemm:closeness}}\label{proof:closeness}
\begin{proof}

We group terms $2$ by $2$ in the product $\Pi^{2N}$. One has $(I + \frac{\theta^{2N}_{2n}}{2N})(I + \frac{\theta^{2N}_{2n-1}}{2N}) = (I + \frac{\tilde{\theta}^{N}_{n}}{N})$ with 
$$\tilde{\theta}_n^N = (\frac{\theta^{2N}_{2n} + \theta^{2N}_{2n-1}}{2} + \frac{\theta^{2N}_{2n}\theta^{2N}_{2n-1}}{4N}),$$
So that $\Pi^{2N} = \tilde{\Pi}^N$ where $\tilde{\Pi}^N$ is defined as $\Pi^N$ with $\tilde{\theta}_n^N$.
One has by Prop. \ref{prop:smoothness} that
$$\tilde{\theta}_n^N = {\theta}_{2n}^{2N} + O(\frac1N).$$
We will show that $\tilde{\theta}_n^N ={\theta}_n^N + O(\frac1N).$

Let $D^N_n = \|\theta^N_n - \tilde{\theta}^N_{n}\|$ and $D^N = \frac1N \sum_{n=1}^N D_n.$ We have $$2D^N_n\dot{D}^N_n = -N\langle\nabla^N_n - \tilde{\nabla}^N_n, \theta^N_n - \tilde{\theta}^N_{n}\rangle.$$
In addition, we have 
$$
N(\nabla^N_n - \tilde{\nabla}^N_{n}) = \Pi_{:n}^{N\top}(\Pi^N - B)\Sigma\Pi_{n:}^{N\top} - \tilde{\Pi}_{:n}^{N\top}(\tilde{\Pi}^N - B)\Sigma\tilde{\Pi}_{n:}^{N\top}
$$
so that
$$
N(\nabla^N_n - \tilde{\nabla}^N_{n})= (\Pi^N_{:n} - \tilde{\Pi}_{:n}^N)^{\top}(\Pi^N - B)\Sigma\Pi_{n:}^{N\top} +  \tilde{\Pi}_{:n}^{N\top}(\Pi^N - B)\Sigma(\Pi^N_{n:} - \tilde{\Pi}_{n:}^N)^{\top} + \tilde{\Pi}_{:n}^{N\top}(\Pi^N - \tilde{\Pi}^N)\Sigma\tilde{\Pi}_{n:}^{N\top}.
$$
Note also that since the Jacobian of $(\theta_1, ..,\theta_N) \to \Pi^N$ is 
$$
J_{(\theta_1, .., \theta_N)}(H_1, .., H_N) = \frac1N \sum_{n=1}^N\Pi^N_{:n}H_n\Pi^N_{n:}
$$
and the $\theta^N_n$'s are such that $\|\theta^N_n\| \leq \frac12$, there exists a constant $K >0$ such that $\|\Pi^N_{:n} - \tilde{\Pi}^N_{:n}\| \leq K D^N$.
Again because $\|\theta^N_n\| \leq \frac12$ and  $\|\tilde{\theta}^N_n\| \leq \frac12$, this gives 
$$
N\| \nabla_n - \tilde{\nabla}_{n} \| \leq \alpha K D^N \sqrt{{\ell}^N} + \beta \| \Pi^N- \tilde{\Pi}^N \| 
$$
for some constants $\alpha$, $\beta$.
Finally, we have 
$$
\dot{D}^N_n \leq \frac12 (\alpha K D^N \sqrt{{\ell}^N} + \beta \| \Pi^N- \tilde{\Pi}^N \| )
$$
which gives $\forall t$
\begin{equation}\label{eq:D_n}
2{D}^N_n(t) \leq \alpha K \int_0^t D^N \sqrt{{\ell}^N} + \beta  \int_0^t \| \Pi^N- \tilde{\Pi}^N \| + O(\frac1N).
\end{equation}
We now focus on the $\beta$ term involving $\| \Pi^N- \tilde{\Pi}^N \|$.
Denote $\Delta^N =  \Pi^N - \tilde{\Pi}^N  $. One has
$$
 \dot{\Delta}^N = -\frac1N(\sum_{n=1}^N\Pi^N_{:n}\Pi_{:n}^{N\top}(\Pi^N - B)\Sigma\Pi_{n:}^{N\top}\Pi^N_{n:} + \tilde{\Pi}^N_{:n}\tilde{\Pi}_{:n}^{N^\top}(\tilde{\Pi}^N - B)\Sigma\tilde{\Pi}_{n:}^{N\top}\tilde{\Pi}^N_{n:}),
$$
and equivalently:

$
 \dot{\Delta}^N = -\frac1N(\sum_{n=1}^N[\Pi^N_{:n}\Pi^{N\top}_{:n} - \tilde{\Pi}^N_{:n}\tilde{\Pi}_{:n}^{N\top}](\Pi^N - B)\Sigma\Pi_{n:}^{N\top}\Pi^N_{n:}
 + \tilde{\Pi}^N_{:n}\tilde{\Pi}_{:n}^{N\top}({\Pi^N} - B)\Sigma[\Pi_{n:}^{N\top}\Pi^N_{n:} - \tilde{\Pi}_{n:}^{N\top}\tilde{\Pi}^N_{n:}] + \tilde{\Pi}^N_{:n}\tilde{\Pi}_{:n}^{N\top}({\Pi}^N - \tilde{\Pi}^N)\Sigma \tilde{\Pi}_{n:}^{N\top}\tilde{\Pi}^N_{n:}).
$

Note that similarly to $\|\Pi^N - \tilde{\Pi}^N\|$ there exist $K'$ such that $\|\Pi^N_{:n}\Pi^{N\top}_{:n} - \tilde{\Pi}^N_{:n}\tilde{\Pi}_{:n}^{N\top}\| \leq K'D$
so that 
$$\|\dot{\Delta}^N + \frac1N \sum_{n=1}^N \tilde{\Pi}^N_{:n}\tilde{\Pi}_{:n}^{N\top} \Delta^N \tilde{\Pi}_{n:}^{N\top}\tilde{\Pi}^N_{n:}\| \leq a K' D \sqrt{\ell^N}.$$ 
Let us denote by $H$ the operator: $$H(\Delta) = \frac1N \sum_{n=1}^N \tilde{\Pi}^N_{:n}\tilde{\Pi}_{:n}^{N\top} \Delta \Sigma \tilde{\Pi}_{n:}^{N\top}\tilde{\Pi}^N_{n:}.$$ 
Our (PL) conditions precisely write $-\Delta^\top H(\Delta) \leq - \lambda \|\Delta\|^2$ for some $\lambda > 0$.
Let $\phi^N = \frac12 \|\Delta^N\|^2.$ One has 
$$
\frac{\dd \phi^N}{\dd t} = \langle\Delta^N,\dot{\Delta}^N + H(\Delta^N)\rangle -   \langle\Delta^N,H(\Delta^N)\rangle
$$
so that 
$$
\frac{\dd \phi^N}{\dd t} \leq (aK' D \sqrt{\ell^N} )\sqrt{2\phi^N} - 2 \lambda \phi^N.
$$
Since $\|\Delta^N\| = \sqrt{2\phi^N}$ we get
$$\frac{\dd\|\Delta^N\|}{\dd t} = \frac{\frac{\dd \phi^N}{\dd t}}{\sqrt{2\phi^N}}.$$
We finally have
$$
\frac{\dd \|\Delta^N\|}{\dd t} \leq a K' D^N \sqrt{\ell^N} - \lambda \|\Delta^N\|.
$$
Integrating, we get 
$$
\|\Delta^N(t)\| \leq - \lambda \int_0^t\|\Delta^N\| + \int_0^taK'D^N\sqrt{l^N} + O(\frac1N)
$$
and then
$$
\int_0^t\|\Delta^N\| \leq \frac{1}{\lambda} \int_0^taK'D^N\sqrt{l^N} + O(\frac1N).
$$
Plugging this into \eqref{eq:D_n} leads to
$$
0 \leq 2{D}^N_n(t) \leq \alpha K \int_0^t D^N \sqrt{{\ell}^N} + \frac{\beta}{\lambda}  \int_0^t aK'D^N\sqrt{\ell^N} + O(\frac1N).
$$
Let $n(t)$ be such that $D^N_{n(t)}(t) = \max_{i\in[1, N]}D^N_i(t)$.
We have
$$
0 \leq 2{D}^N_{n(t)}(t) \leq \mu \int_0^t D^N_{n(\tau)}(\tau) \sqrt{{\ell}^N(\tau)}d\tau + O(\frac1N)
$$
for some constant $\mu >0$.
And since $\sqrt{{\ell}^N}$ is integrable, we get by Gronwall's inequality that $D^N_n = O(\frac1N)$ $\forall n \in [1, N]$.
We showed: 
$$
\theta^{2N}_{2n} =  \theta^{N}_{n} + O(\frac1N).
$$
\end{proof}

\subsection{Proof of Th. \ref{th:limit_map}}\label{proof:limit_map}

We first prove the following lemma \ref{lem:pi_n} before proving Th.  \ref{th:limit_map}.
\begin{lem}\label{lem:pi_n}
Under the assumptions of Th.  \ref{th:limit_map}, let $\sigma$ be such that $\psi_{\sigma(N)} \to \psi_{\sigma}$ uniformly (in $\|.\|_{\infty}$ w.r.t $(s,t)$). Then one has $\Pi^{\sigma(N)}(t) \to \Pi(t)$ uniformly (in $t$) where $\Pi(t)$ maps $x_0$ to the solution at time $1$ of the Neural ODE $\frac{\dd x }{\dd s} = \psi_{\sigma}(s, t)x(s)$ with initial condition $x_0.$
\end{lem}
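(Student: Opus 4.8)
The plan is to recognize the product $\Pi^{\sigma(N)}(t)$ as a product-form Euler scheme for the linear ODE $\frac{\dd x}{\dd s} = \psi_\sigma(s,t)x$ and to quantify its deviation from the exact propagator. Write $N' = \sigma(N)$ for brevity and let $\Phi(s,t)$ denote the fundamental matrix solving $\partial_s \Phi(s,t) = \psi_\sigma(s,t)\Phi(s,t)$ with $\Phi(0,t) = I_d$, so that $\Pi(t) = \Phi(1,t)$ is exactly the operator $x_0 \mapsto x(1)$. Since the bound $\|\theta_n^N(t)\| < \frac12$ of Prop. \ref{prop:smoothness} (equivalently $\|\psi_N\|_\infty \le \frac12$) passes to the uniform limit, $\psi_\sigma$ is bounded by $\frac12$, which guarantees both that $\Phi(\cdot, t)$ is well defined and that $\sup_{s,t}\|\Phi(s,t)\|$ is finite. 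Moreover, because $\psi_\sigma$ is Lipschitz in $s$ (Lemma \ref{lemm:limit_functions}), the map $s \mapsto \partial_s\Phi(s,t) = \psi_\sigma(s,t)\Phi(s,t)$ is itself Lipschitz, so $\Phi(\cdot,t)$ is $\Cc^1$ with Lipschitz derivative and the Euler local truncation error is $O(\frac{1}{N'^2})$ uniformly in $t$.

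Next I would compare the partial products $P_n \eqdef (I_d + \frac{\theta_n^{N'}}{N'})\cdots(I_d + \frac{\theta_1^{N'}}{N'})$ with the sampled propagator $\Phi_n \eqdef \Phi(\frac{n}{N'},t)$ through the global error $E_n \eqdef P_n - \Phi_n$, starting from $E_0 = 0$. Using the one-step expansion $\Phi_n = (I_d + \frac1{N'}\psi_\sigma(\frac{n-1}{N'},t))\Phi_{n-1} + \rho_n$ with $\|\rho_n\| = O(\frac1{N'^2})$, the recursion $P_n = (I_d + \frac{\theta_n^{N'}}{N'})P_{n-1}$ yields
$$E_n = \Big(I_d + \tfrac{\theta_n^{N'}}{N'}\Big)E_{n-1} + \tfrac1{N'}\big(\theta_n^{N'} - \psi_\sigma(\tfrac{n-1}{N'},t)\big)\Phi_{n-1} - \rho_n.$$
The crucial term is $\theta_n^{N'} - \psi_\sigma(\frac{n-1}{N'},t)$: since $\theta_n^{N'}(t) = \psi_{N'}(\frac{n}{N'},t)$, combining the uniform convergence $\|\psi_{N'} - \psi_\sigma\|_\infty \to 0$ of Lemma \ref{lemm:limit_functions} with the Lipschitz-in-$s$ regularity of $\psi_\sigma$ bounds this term by some $\delta_{N'} \to 0$ that is uniform in both $n$ and $t$.

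Taking norms, using $\|I_d + \frac{\theta_n^{N'}}{N'}\| \le 1 + \frac1{2N'}$ (from $\|\theta_n^{N'}\| < \frac12$) and the uniform bound on $\|\Phi_{n-1}\|$, gives
$$\|E_n\| \le \Big(1 + \tfrac1{2N'}\Big)\|E_{n-1}\| + \tfrac{\delta_{N'}}{N'}\sup_{s,t}\|\Phi(s,t)\| + \tfrac{C}{N'^2}.$$
The discrete Gronwall lemma then yields $\|E_{N'}\| = O\big(\delta_{N'} + \frac1{N'}\big)$, the amplification factor being controlled by $(1+\frac1{2N'})^{N'} \le e^{1/2}$. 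Since $\Pi^{N'}(t) = P_{N'}$ and $\Phi(1,t) = \Pi(t)$, and since the bound tends to $0$ \emph{uniformly in $t$}, this is exactly the claim.

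I expect the only real subtlety to be the simultaneous control of the two error sources — the standard Euler truncation error $\rho_n$ and the error $\delta_{N'}$ from discretizing the limit field $\psi_\sigma$ by the piecewise-constant $\psi_{N'}$ — together with the verification that the resulting bound is uniform in $t$. This uniformity rests entirely on the uniform-in-$t$ convergence $\psi_{\sigma(N)} \to \psi_\sigma$ already granted by Lemma \ref{lemm:limit_functions}, and on the a priori bound $\|\theta_n^{N'}\| < \frac12$ that keeps the Gronwall amplification factor bounded independently of $N'$ and $t$.
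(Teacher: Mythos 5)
Your proof is correct and follows essentially the same route as the paper: both control the product--propagator discrepancy by a perturbed-Euler recursion whose defect combines the $O(1/N'^2)$ truncation error with the uniform error $\|\psi_{\sigma(N)}-\psi_\sigma\|_\infty$, closed by discrete Gronwall with amplification bounded via $\|\theta_n^{N'}\|<\tfrac12$. The only cosmetic difference is that the paper inserts the Euler scheme for the limit field $\psi_\sigma$ as an intermediate pivot (invoking Prop.~\ref{prop:approx_error} for the truncation part) and argues on unit vectors, whereas you fold both error sources into a single recursion at the level of the fundamental matrix.
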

\begin{proof}
Consider for $x_0 \in \RR^d$ with $\|x_0 \|=1$ the discrete scheme
$$
x_{n+1} = x_n +\frac1{\sigma(N)}\theta^{\sigma(N)}_n(t)x_n,
$$
the ODE
$$
\frac{\dd x }{\dd s} = \psi_{\sigma}(s, t)x(s),
$$
and the Euler scheme with time step $\frac1{\sigma(N)}$ for its discretization
$$
y_{n+1} = y_n +\frac1{\sigma(N)}\psi_{\sigma}(\frac{n}{\sigma(N)}, t)y_n.
$$
We know by Prop. \ref{prop:approx_error}, since $x_0$ has unit norm that
$$
\|x(\frac{n}{\sigma(N)}) - y_n \| \leq \frac{e^{\frac12}-1}{\sigma(N)} \| \partial_s \psi_{\sigma}(., t) + \psi_{\sigma}^2(., t)\|^{{K}\times[0, 1]}_{ \infty}$$
where $K$ is a compact that contains all the trajectory starting from any unit norm initial condition. 
Since $\forall t \in \RR_+$, $\| \partial_s\psi_{\sigma}(s,t)\| \leq C $ and $\|\psi_{\sigma}(s,t)^2\| \leq \frac12$, there exists $\tilde{C} >0$ and independent of $t$ such that 
$$
\|x(\frac{n}{\sigma(N)}) - y_n \| \leq \frac{\tilde{C}}{\sigma(N)}
$$
Now, let $e_n = y_{n} - x_n$. We have 
$$
e_{n+1} = e_n(1 + \frac1{\sigma(N)}\psi_{\sigma}(\frac{n}{\sigma(N)}, t)) + \frac1{\sigma(N)}(\psi_{\sigma}(\frac{n}{\sigma(N)}, t) - \psi_{\sigma(N)}(\frac{n}{\sigma(N)},t))x_n.
$$
Since $\|\theta^N_n\| \leq \frac12$ and $x_0$ has unit norm, there exists $M >0$ independent of $x_0$ such that, $\forall n$ and $N$, $\|x_n\| \leq M$.
Thus 
$$
\|e_{n+1}\| \leq \|e_n\|(1 + \frac1{2\sigma(N)}) + \frac1{\sigma(N)}\sup_{(s,t)\in[0,1]\times\RR_{+}}\|\psi_{\sigma}(s,t) - \psi_{\sigma(N)}(s,t)\|M.
$$
The fact that $\sup_{(s,t)\in[0,1]\times\RR_{+}}\|\psi_{\sigma}(s,t) - \psi_{\sigma(N)}(s,t)\| \to 0$ (uniform convergence of $\psi_{\sigma(N)}$ to $\psi_{\sigma}$) along with the discrete Gronwall's lemma leads to $\|e_n\| = o(1)$ independent of $t$ and $x_0$. More precisely, 
$$
 \sup_{t\in \RR_+, x_0\in\RR^d, \|x_0\|=1}\|\Pi^{\sigma(N)}(t)x_0 - \Pi(t) x_0\| \to 0
$$
as $N \to \infty$. We obtain the uniform convergence with $t$.

\end{proof}
We can now prove our Th. \ref{th:limit_map}.
\begin{proof}
Consider $(\psi_{\sigma(N)})_N$ a sub-sequence of $(\psi_N)_N$ as in lemma \ref{lemm:limit_functions} that converges to some $\psi_{\sigma}$.

1) We first prove the uniqueness of the limit. 

We want to show that $\psi_{\sigma}$ does not depend on $\sigma$. This will imply the uniqueness of any accumulation point of the relatively compact sequence $(\psi_N)_N$ and thus its convergence.

We have $\forall s \in[0, 1]$, $$\partial_t \psi_{\sigma(N)}(s, t) = - \Pi^{\sigma(N)\top}_{:\lfloor \sigma(N)s \rfloor}(t)(\Pi^{\sigma(N)}(t) - B)\Pi^{\sigma(N)\top}_{\lfloor \sigma(N)s \rfloor:}(t).$$ As $N \to \infty$, we have thanks to lemma \ref{lem:pi_n} that the right hand term converges uniformly to
$$
- \Pi^{\top}_{:s}(t)(\Pi(t) - B)\Pi^{\top}_{s:}(t)
$$ 
where $\Pi$ maps $x_0$ to the solution at time $1$ of the Neural ODE $\frac{\dd x }{\dd s} = \psi_{\sigma}(s, t)x(s)$ with initial condition $x_0$, $\Pi_{:s}(t)$ maps $x_0$ to the solution at time $s$ of the Neural ODE $\frac{\dd x }{\dd s} = \psi_{\sigma}(s, t)x(s)$ with initial condition $x_0$ and $\Pi_{s:}(t)$ maps $x_0$ to the solution at time $1-s$ of the Neural ODE $\frac{\dd x }{\dd s} = \psi_{\sigma}(s, t)x(s)$ with initial condition $x_0$. 

This uniform convergence makes it possible to consider the limit ODE as $N \to \infty$:
\begin{equation}
    \partial_t \psi_{\sigma}(., t) = F(\psi_{\sigma}(., t)), \quad \psi_{\sigma}(., 0) = 0_{d \times d}
\end{equation}
where $\forall s \in [0,1]$, 
$$ F(\psi_{\sigma}(s, t) )= - \Pi^{\top}_{:s}(t)(\Pi(t) - B)\Pi^{\top}_{s:}(t).$$
We now show that $F$ is Lipschitz continuous which will guarantee uniqueness through the Picard–Lindelöf theorem. Recall that we have $\forall (s,t)\in [0,1] \times \RR_+$: $$\| \psi_{\sigma}(s, t) \| \leq \frac{1}{2}.$$
Let $\psi_1$, $\psi_2$ with $\| \psi_{1}(s, t) \| \leq \frac{1}{2}$ and $\| \psi_{2}(s, t) \| \leq \frac{1}{2}$ and $\Pi_1(t)$, $\Pi_2(t)$ the corresponding flows. 

Let $x_0$ in $\RR^d$ with unit norm, $x_1$ (resp. $x_2$) be the solutions of $\frac{\dd x }{\dd s} = \psi_1(s, t)x(s)$ (resp. $\frac{\dd x }{\dd s} = \psi_2(s, t)x(s)$) with initial condition $x_0$. Let $y = x_1 - x_2 $. 

One has $\Pi_1(t)x_0 = x_1(1)$ and  $\Pi_2(t)x_0 = x_2(1)$. One has $\dot{y} = \psi_1 x_1 - \psi_2 x_2 = \psi_2 y + (\psi_1 - \psi_2)x_1$. Hence, since $y(0) = 0$, $\|y(s)\| \leq \int_0^s\|\psi_2\| \|y\| +\|\psi_1 - \psi_2\|_{\infty}| \|x_1\|_{\infty}$, we have $$\|y(s)\| \leq \frac12 \|y(s)\| + \|\psi_1 - \psi_2\|_{\infty} . \|\Pi_1(t)\|$$
and since $\forall t \in \RR_+$, $\|\Pi_1(t)\| \leq 2e$ we get 
$$\|\Pi_1(t)x_0 - \Pi_2(t)x_0\| = \|y(1)\| \leq \alpha \|\psi_1 - \psi_2\|_{\infty}$$ for some $\alpha > 0$. The same arguments go for $\Pi_{:s}$ and $\Pi_{s:}$. 

Since we only consider maps $\psi_{\sigma}$ such that $\|\psi_{\sigma}(s, t)\| \leq \frac12$, this implies that the product is also Lipschitz and thus $F$ is Lipschitz. This guarantees the uniqueness of a solution $\psi$ to the Cauchy problem and we have that $\psi_N \to \psi$ uniformly.

2) We now turn to the convergence speed. 

We have $\|\psi_{2N} -\psi_{N}\| \leq \frac{D}{N}$ for some $D > 0$ thanks to lemma \ref{lemm:closeness}. For $k \in \NN$, we have that
$$
\|\psi_{2^kN} -\psi_{N}\| \leq \sum_{i=0}^{k-1} \|\psi_{2^{i+1}N} -\psi_{2^iN}\| \leq \frac{D}{N}\sum_{i=0}^{k-1}\frac{1}{2^i} \leq \frac{2D}{N}.$$
Letting $k \to \infty$ finally gives $\|\psi -\psi_{N}\| \leq \frac{2D}{N}.$

\end{proof}
\subsection{Proof of Prop. \ref{prop:reconstruction_error}}\label{proof:reconstruction_error}
\begin{proof}
We denote $r_n =  \tilde{x}_{n} - x_{n}$. 

One has $r_N = 0$ and 
$$
r_n = \tilde{x}_{n+1} - \frac1N f(\tilde{x}_{n+1}, \theta^N_n) - x_{n+1} + \frac{1}{N}f({x}_{n}, \theta^N_n),
$$
that is
$$
r_n=r_{n+1} +\frac1N(f(x_{n+1} - \frac{1}{N}f({x}_{n}, \theta^N_n), \theta^N_n) - f(\tilde{x}_{n+1}, \theta^N_n)).
$$ 
Since 
$$
f(x_{n+1} - \frac{1}{N}f({x}_{n}, \theta^N_n), \theta_n^N) = f(x_{n+1}, \theta^N_n) - \frac{1}{N}\partial_x{f(x_{n+1},\theta_n^N)}[f({x}_{n}, \theta^N_n)] + O(\frac{1}{N^2})
$$
this gives 
$$r_n = r_{n+1} + \frac{1}{N}(f(x_{n+1}, \theta^N_n)-f(\tilde{x}_{n+1}, \theta^N_n)) - \frac{1}{N^2}\partial_x{f(x_{n+1},\theta_n^N)}[f({x}_{n}, \theta^N_n)] + O(\frac{1}{N^3}).$$
Denoting 
$$K_N = \sup_{n\in[N-1]}\| \partial_x{f(.,\theta_n^N)}  \|^{K}_{ \infty}\|f(.,\theta_n^N)\|^{K}_{ \infty},$$ 
we have the following inequality:
$$
\|r_n\| \leq (1 + \frac {L_f}N)\|r_{n+1}\| + \frac{1}{N^2}K_N + O(\frac{1}{N^3})
$$
and since $r_N = 0$, the discrete Gronwall lemma leads to $\|r_n\| \leq \frac{e^{L_f}-1}{{L_f}N}K_N + O(\frac{1}{N^2}).$ In addition, one has $K_N \leq L_f C_f$ so that 
$$
\|r_n\| \leq \frac{e^{L_f}-1}{N}C_f + O(\frac{1}{N^2}).
$$
\end{proof}
\subsection{Proof of Prop. \ref{prop:gradient_error}}\label{proof:gradient_error}
\begin{proof}
1) We first control the error made in the gradient with respect to activations.

Denote $$g_n = \nabla_{\tilde{x}_n}L - \nabla_{x_n}L.$$
One has using formulas \eqref{eq:backprop} and \eqref{eq:backprop_proxy} that $$g_n = g_{n+1} +\frac{1}{N}(\partial_xf(\tilde{x}_n, \theta^N_n)-\partial_xf(x_n, \theta^N_n))^\top\nabla_{\tilde{x}_{n+1}}L + \frac1N\partial_xf(x_n, \theta^N_n)^\top g_{n+1}.$$
Since  $$\|\partial_xf(x_n, \theta^N_n)^\top g_{n+1}\| \leq L_f\|g_{n+1}\|$$ and because $$\|(\partial_xf(\tilde{x}_n, \theta^N_n)-\partial_xf(x_n, \theta^N_n))^\top\nabla_{\tilde{x}_{n+1}}L\| \leq L_{df}\|\tilde{x}_n - x_n\|g,$$ where $g$ is a bound on $\nabla_{\tilde{x}_{n+1}}L$, we conclude by using Prop. \ref{prop:reconstruction_error} and the discrete Gronwall's lemma.

2) We can now control the gradients with respect to the parameters $\theta^N_n$'s.

Denote 
$$t_n = \tilde{\nabla}_{\theta^N_n}L - {\nabla}_{\theta_n^N}L.$$ 
We have 
$$Nt_n = -[\partial_{\theta}f({x}_{n},\theta_n^N) - \partial_{\theta}f(\tilde{x}_{n},\theta_n^N)]^\top \nabla_{{x}_{n}}L - [\partial_{\theta}f(\tilde{x}_{n},\theta_n^N)]^\top g_n.$$
Hence $N\|t_n\| \leq L_{\theta}\|x_n - \tilde{x}_n\|g + C_{\theta}\|g_n\|$ where $g$ is a bound on $\nabla_{x_n}L$.

Using our bound on $\|g_n\|$ and Prop. \ref{prop:reconstruction_error} we get 
$$N\|t_n\| \leq \frac{L_{\theta}(e^{L_f}-1)gC_f}{N} + \frac{(e^{L_f}-1)(e^{L_f}-1) C_f L_{df}gC_{\theta}}{L_fN} +  O(\frac{1}{N^2})$$
and thus 
$$t_n = O(\frac{1}{N^2}).$$
\end{proof}

\subsection{Proof of Prop. \ref{prop:reconstruction_error_heun}}\label{proof:reconstruction_error_heun}

In the following, we let for short $f_n(x) = f(x, \theta_n^N)$, and we define
\begin{equation}
\label{app:eq:heun_functions}
\phi_n(x) = \frac12\left(f_n(x) + f_{n+1}(x + \frac1N f_n(x))\right)\text{ and }  \psi_n(x) =  \frac12\left(f_{n+1}(x) + f_{n}(x - \frac1N f_{n+1}(x))\right)
\end{equation}

so that Heun's forward and backward equations are
$$
x_{n+1} = x_n + \frac1N \phi_n(x_n)  \text{ and } \tilde{x}_n = \tilde{x}_{n+1} -\frac1N \psi_n(\tilde{x}_{n+1}).$$

We have the following lemma that quantifies the reconstruction error over one iteration:
\begin{lem}
  \label{app:lem:heun_rec}
  For $x\in\mathbb{R}$, we have as $N$ goes to infinity
  $$
  \psi_n(x + \frac1N \phi_n(x)) - \phi_n(x) = \frac1{4N}\left(J_{n+1}(x) - J_n(x)\right)[f_{n+1}(x) - f_n(x)] + O(\frac1{N^2}),
  $$
  
where $J_n =\partial_x f_n(x)$ is the Jacobian of $f_n$.
\end{lem}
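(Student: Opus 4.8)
The plan is to prove the identity by a careful second-order Taylor expansion in $\frac1N$ of the composition $\psi_n\circ(\mathrm{id}+\frac1N\phi_n)$, keeping every term up to order $\frac1N$ and checking that the zeroth-order parts and the unwanted first-order parts cancel. Throughout I abbreviate $f_n=f_n(x)$ and $J_n=\partial_x f_n(x)$, and I use that the standing boundedness and smoothness hypotheses (Assumption~\ref{asp:bound} together with the Lipschitz assumptions of Prop.~\ref{prop:reconstruction_error_heun}, which control the relevant second derivatives) make all remainder terms uniformly $O(\frac1{N^2})$. First I would expand $\phi_n$ itself: since $f_{n+1}(x+\frac1N f_n)=f_{n+1}+\frac1N J_{n+1}[f_n]+O(\frac1{N^2})$, this gives $\phi_n(x)=\frac12(f_n+f_{n+1})+\frac1{2N}J_{n+1}[f_n]+O(\frac1{N^2})$. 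Setting $a:=\frac12(f_n+f_{n+1})$, the argument of $\psi_n$ is $z:=x+\frac1N\phi_n(x)=x+\frac1N a+O(\frac1{N^2})$.

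Next I would expand the two pieces of $\psi_n(z)=\frac12\bigl(f_{n+1}(z)+f_n(z-\frac1N f_{n+1}(z))\bigr)$ separately. The first piece is immediate: $f_{n+1}(z)=f_{n+1}+\frac1N J_{n+1}[a]+O(\frac1{N^2})$. For the second piece I first locate the inner point: since $f_{n+1}(z)=f_{n+1}+O(\frac1N)$, one has $z-\frac1N f_{n+1}(z)=x+\frac1N(a-f_{n+1})+O(\frac1{N^2})=x+\frac1{2N}(f_n-f_{n+1})+O(\frac1{N^2})$, where I used $a-f_{n+1}=\frac12(f_n-f_{n+1})$. Expanding $f_n$ there yields $f_n(z-\frac1N f_{n+1}(z))=f_n+\frac1{2N}J_n[f_n-f_{n+1}]+O(\frac1{N^2})$.

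Finally I would collect terms. Summing the two pieces gives $\psi_n(z)=\frac12(f_n+f_{n+1})+\frac1{2N}J_{n+1}[a]+\frac1{4N}J_n[f_n-f_{n+1}]+O(\frac1{N^2})$, and substituting $a=\frac12(f_n+f_{n+1})$ turns the middle term into $\frac1{4N}J_{n+1}[f_n+f_{n+1}]$. Subtracting $\phi_n(x)$ cancels the $\frac12(f_n+f_{n+1})$ term; combining the two $J_{n+1}$ contributions gives $\frac1{4N}J_{n+1}[f_n+f_{n+1}]-\frac1{2N}J_{n+1}[f_n]=\frac1{4N}J_{n+1}[f_{n+1}-f_n]$, and rewriting the $J_n$ term as $-\frac1{4N}J_n[f_{n+1}-f_n]$, everything telescopes into $\frac1{4N}(J_{n+1}-J_n)[f_{n+1}-f_n]+O(\frac1{N^2})$, exactly the claim.

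The step I expect to be most delicate is the expansion of the doubly nested term $f_n(z-\frac1N f_{n+1}(z))$: here both the outer point $z$ and the inner correction $-\frac1N f_{n+1}(z)$ are $O(\frac1N)$ perturbations, so one must be careful that their first-order contributions combine correctly (producing the coefficient $\frac12(f_n-f_{n+1})$ rather than $a$) and that all cross terms are genuinely $O(\frac1{N^2})$. Ensuring the remainder is uniform in $n$ and $x$ is precisely what forces the use of the uniform bounds on $f_n$, $J_n$ and their derivatives supplied by the standing assumptions.
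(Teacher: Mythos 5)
Your proposal is correct and takes essentially the same approach as the paper: a first-order Taylor expansion in $\frac1N$ of $\psi_n$ at the shifted point, followed by collecting terms so that the zeroth order cancels and the first order telescopes to $\frac1{4N}\left(J_{n+1}-J_n\right)[f_{n+1}-f_n]$. The only cosmetic difference is that the paper first expands $\phi_n$ and $\psi_n$ at $x$ and then composes via the chain rule, whereas you expand the two constituent pieces of $\psi_n$ directly at the perturbed argument; the resulting computation is identical.
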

\begin{proof}
As $N$ goes to infinity, we have the following expansions of~\eqref{app:eq:heun_functions}:
$$
\phi_n(x) = \frac12(f_n(x) + f_{n+1}(x)) + \frac1{2N} J_{n+1}(x)[f_n(x)] + O(\frac1{N^2}),
$$
$$
\psi_n(x) = \frac12(f_n(x) + f_{n+1}(x)) - \frac1{2N} J_{n}(x)[f_{n+1}(x)]+ O(\frac1{N^2}).
$$
As a consequence, we have
\begin{align*}
  \psi_n(x + \frac1N\phi_n(x)) = &\frac12(f_n(x) + f_{n+1}(x)) - \frac1{2N} J_{n}(x)[f_{n+1}(x)]\\
  &+ \frac1{4N}(J_n(x)[f_n(x) + f_{n+1}(x)] +J_{n+1}(x)[f_n(x) + f_{n+1}(x)] ) +O(\frac1{N^2}). 
\end{align*}
Putting everything together, we find that the zero-th order in $\psi_n(x + \frac1N \phi_n(x)) - \phi_n(x)$ cancels, and that the first order simplifies to $\frac1{4N}\left(J_{n+1}(x) - J_n(x)\right)[f_{n+1}(x) - f_n(x)]$.
\end{proof}
We now turn the the proof of the main proposition:
\begin{proof}
  We let $r_n = \tilde{x}_n - x_n$ the reconstruction error. We have $r_N=0$, and we find
  \begin{align}
    r_n &= \tilde{x}_n - x_n\\
    &=  \tilde{x}_{n+1} -\frac1N \psi_n(\tilde{x}_{n+1}) - x_{n+1} + \frac1N \phi_n(x_n)\\
    &= r_{n+1} - \frac1N\left(\psi_n(\tilde{x}_{n+1}) - \psi_n(x_{n+1})\right) -\frac1N \left(\psi_n(x_{n+1}) - \phi_n(x_n)\right).
  \end{align}
 
Using the triangle inequality, and the $L_f'-$Lispchitz continuity of $\psi_n$, we get
$$
\|r_n\|\leq (1 + \frac{L_f'}N)\|r_{n+1}\| + \frac1N \|\psi_n(x_{n+1}) - \phi_n(x_n)\|.
$$
The last term is controlled with the previous Lemma~\ref{app:lem:heun_rec}:

\begin{align}
  \|\psi_n(x_{n+1}) - \phi_n(x_n)\| &\leq \frac1{4N}\|\left(J_{n+1}(x_n) - J_n(x_n)\right)[f_{n+1}(x_n) - f_n(x_n)]\| + O(\frac1{N^2})  \\
  &\leq \frac{C_f'\Delta_\theta^N}{N} + O(\frac1{N^2}).
\end{align}
We therefore get the recursion
$$
\|r_n\|\leq (1 + \frac{L_f'}N)\|r_{n+1}\| + \frac{C_f'\Delta_\theta^N}{N^2} + O(\frac1{N^3}).
$$

Unrolling the recursion gives,
$$
\|r_n\| \leq \frac{(e^{L_f'} - 1)C_f'}{L_f'N}\Delta_\theta^N + O(\frac1{N^2}).
$$
\end{proof}
\subsection{Proof of Prop. \ref{prop:gradient_error_heun}}\label{proof:gradient_error_heun}
\begin{proof}
1) We first control the error made in the gradient with respect to activations. We have the following recursions:
$$
\nabla_{x_{n}}L = (I + \frac1N\partial_x \phi_n(x_{n+1}))^\top \nabla_{x_{n+1}}L\text{ and }\nabla_{\tilde{x}_{n}}L = (I + \frac1N \partial_x \phi_n(\tilde{x}_{n+1}))^\top \nabla_{\tilde{x}_{n+1}}L
$$

Letting $r'_n = \nabla_{x_{n}}L  - \nabla_{\tilde{x}_{n}}L$, we have
$$
r'_n = r'_{n+1} + \frac1N\partial_x \phi_n(x_{n+1})^\top r'_{n+1} + \frac1N\left(\partial_x \phi_n(x_{n+1}) - \partial_x \phi_n(\tilde{x}_{n+1})\right)^\top\nabla_{\tilde{x}_{n+1}}L
$$

Therefore, using the triangle inequality, and letting $g$ a bound on the norm of the gradients $\nabla_{\tilde{x}_{n+1}}L$ and $\Delta$ a Lipschitz constant of $\partial_x \phi_n$, we find
$$
\|r'_n\| \leq (1 + \frac{L_f'}N)\|r'_{n+1}\| + \frac1N g\Delta \|x_{n+1} - \tilde{x}_{n+1}\|
$$
The last term is controled with the previous proposition, and we find
$$
\|r'_n\| \leq (1 + \frac{L_f'}N)\|r'_{n+1}\| + \frac{(e^{L_f'} - 1)C_f'g\Delta}{L_f'N^2}\Delta_\theta^N + O(\frac1{N^3}),
$$
which gives by unrolling:
$$
\|r_n'\| \leq\frac{(e^{L_f'} - 1)^2C_f'g\Delta}{L_f'^2N}\Delta_\theta^N + O(\frac1{N^2}).
$$

2) We can now control the gradients with respect to parameters. Since Heun's method involves parameters $\theta_n^N$ both for the computation of $x_{n}$ and $x_{n+1}$, the gradient formula is slightly more complicated than for the classical ResNet. It is the sum of two terms, the first one $\nabla^1_{\theta^N_n}L$ corresponding to iteration $n$ and the second one $\nabla^2_{\theta^N_n}L$ corresponding to iteration $n - 1$.

We have
$$
\nabla^1_{\theta^N_n}L = \frac1{2N} \left(\partial_{\theta}f(x_n, \theta_n^N)+ \frac1N\partial_x f(y_n, \theta_{n+1}^N)\partial_\theta f(x_n, \theta_n^N)\right)^\top\nabla_{x_n}L 
$$
and 
$$
\nabla^2_{\theta^N_n}L = \frac1{2N} \left(\partial_{\theta}f(y_{n-1}, \theta_{n-1}^N)\right)^{\top} (I + \frac1N\partial_x f(x_{n-1}, \theta_{n-1}^N))^\top \nabla_{x_{n-1}}L.
$$
The gradient $\nabla_{\theta^N_n}L$ is finally 
$$
\nabla_{\theta^N_n}L = \nabla^1_{\theta^N_n}L + \nabla^2_{\theta^N_n}L.
$$
Overall, these equations map the activations $x_{n}$ and $x_{n-1}$, and the gradients $\nabla_{x_{n-1}}L$ and $\nabla_{x_{n}}L$ to the gradient $\nabla_{\theta_n^N}$, which we rewrite as
$$
\nabla_{\theta^N_n}L = \Psi(x_n, x_{n-1}, \nabla_{x_{n}}L, \nabla_{x_{n-1}}L),
$$
where the function $\Psi$ is explicitly defined by the above equations.
With the memory-free backward pass, the gradient is rather estimated as
$$
\tilde{\nabla}_{\theta^N_n}L = \Psi(\tilde{x}_n, \tilde{x}_{n-1}, \nabla_{\tilde{x}_{n}}L, \nabla_{\tilde{x}_{n-1}}L).
$$

The function $\Psi$ is Lispchitz-continuous since all functions involved in its composition are Lipschitz-continuous and the activations belong to a compact set, and its Lipschitz constant scales as $\frac1N$. We write its Lipschitz constant as  $\frac{L_{\Psi}}N$, and we get:
\begin{align}
  \|\nabla_{\theta^N_n}L - \tilde{\nabla}_{\theta^N_n}L\| &= \|\Psi(x_n, x_{n-1}, \nabla_{x_{n}}L, \nabla_{x_{n-1}}L) - \Psi(\tilde{x}_n, \tilde{x}_{n-1}, \nabla_{\tilde{x}_{n}}L, \nabla_{\tilde{x}_{n-1}}L)\|\\
  &\leq \frac{L_{\Psi}}N( \|x_n - \tilde{x}_n\| + \|x_{n-1} - \tilde{x}_{n-1}\| + \|\nabla_{x_n}L - \nabla_{\tilde{x}_n}L\| + \|\nabla_{x_{n-1}}L - \nabla_{\tilde{x}_{n-1}}L\|).
\end{align}
Using the previous propositions, we get:
$$
\|\nabla_{\theta^N_n}L - \tilde{\nabla}_{\theta^N_n}L\| = O(\frac{\Delta_\theta^N}{N^2} + \frac1{N^3}).
$$
\end{proof}

\section{Experimental details}\label{app:exp_details}

In all our experiments, we use Nvidia Tesla V100 GPUs. 

\subsection{CIFAR}

For our experiments on CIFAR-10 (training from scratch), we used a batch-size of $128$ and we employed SGD with a momentum of $0.9$. The training was done over $200$ epochs. The initial learning rate was $0.1$ and we used a cosine learning rate scheduler. A constant weight decay was set to $5 \times 10^{-4}$. Standard inputs preprocessing as proposed in Pytorch \citep{paszke2017automatic} was performed.  

For our finetuning experiment on CIFAR-10, we used a batch-size of $128$ and we employed SGD with a momentum of $0.9$. The training was done over $5$ epochs. The learning rate was kept constant to $10^{-3}$. A constant weight decay was set to $5 \times 10^{-4}$. Standard inputs preprocessing as proposed in Pytorch was also performed.  

For our experiment with our simple ResNet model that processes the input by a $5 \times 5$ convolution with $16$ out channels, we used a batch-size of $256$ and we employed SGD with a momentum of $0.9$. The training was done over $90$ epochs. The learning rate was set to $10^{-1}$ and was decayed by a factor $10$ every $30$ epochs. A constant weight decay was set to $5 \times 10^{-4}$. Standard inputs preprocessing as proposed in Pytorch was also performed.  

\subsection{ImageNet}

For our experiments on ImageNet (training from scratch), we used a batch-size of $256$ and we employed SGD with a momentum of $0.9$. The training was done over $100$ epochs. The initial learning rate was $0.1$ and was decayed by a factor $10$ every $30$ epochs. A constant weight decay was set to $10^{-4}$. Standard inputs preprocessing as proposed in Pytorch was performed: normalization, random croping of size $224 \times 224$ pixels, random horizontal flip. 

For our finetuning experiment on ImageNet, we used a batch-size of $256$ and we employed SGD with a momentum of $0.9$. The training was done over $3$ epochs. The learning rate was kept constant to $5 \times 10^{-4}$. A constant weight decay was set to $10^{-4}$. Standard inputs preprocessing as proposed in Pytorch was performed: normalization, random croping of size $224 \times 224$ pixels, random horizontal flip.

\section{Architecture details}\label{app:resnet_recap}
In computer vision, the ResNet as presented in \citep{he2016deep} first applies non residual transformations to the input image: a feature extension convolution that goes to $3$ channels to 64, a batch norm, a non-linearity (ReLU) and optionally a maxpooling. 

It is then made of $4$ layers (each layer is a series of residual blocks) of various depth, all of which perform residual connections. Each of the $4$ layers works at different scales (with an input with a different number of channels): typically $64$, $128$, $256$ and $512$ respectively.
There are two types of residual blocks: Basic Blocks and Bottlenecks. Both are made of a successions of convolutions $\mathrm{conv}$, batch normalizations $\mathrm{bn}$ \citep{ioffe2015batch} and ReLU non-linearity $\sigma$.
For example, a Basic Block iterates (in a pre-activation \citep{he2016identity} fashion):
$$x \to x + \mathrm{bn}(\mathrm{conv}(\sigma(\mathrm{bn}(\mathrm{conv}(\sigma(x)))))).$$

Finally, there is a classification module: average pooling followed by a fully connected layer.

\newpage

\end{document}